\providecommand{\tabularnewline}{\\}
\newtheorem{theorem}{Theorem}
\newtheorem{condition}{Condition}
\newtheorem{corollary}{Corollary}
\newtheorem{lemma}{Lemma}
\newenvironment{proof}[1][Proof]{\textbf{#1.} }{\ \rule{0.5em}{0.5em}}
\begin{document}

\title{Consistency Results for Stationary Autoregressive Processes with
Constrained Coefficients}

\author{Alessio Sancetta\thanks{Acknowledgements: I am grateful to Luca Mucciante for insightful conversations.
E-mail: <asancetta@gmail.com>, URL: <http://sites.google.com/site/wwwsancetta/>.
Address for correspondence: Department of Economics, Royal Holloway
University of London, Egham TW20 0EX, UK }\\
}
\maketitle
\begin{abstract}
We consider stationary autoregressive processes with coefficients
restricted to an ellipsoid, which includes autoregressive processes
with absolutely summable coefficients. We provide consistency results
under different norms for the estimation of such processes using constrained
and penalized estimators. As an application we show some weak form
of universal consistency. Simulations show that directly including
the constraint in the estimation can lead to more robust results. 

\textbf{Key Words:} consistency, empirical process, ridge regression,
reproducing kernel Hilbert space, universal consistency. 
\end{abstract}

\section{Introduction}

It is common to impose constraints on the decay rate of the autoregressive
coefficients in order to derive results amenable to estimation for
the purpose of prediction. At minimum, these constraints tend to require
that the AR coefficients are absolutely summable. Then, a natural
approach when dealing with high order autoregressive models is to
consider sieve estimation. Sieve estimation of infinite AR models
has been considered by various authors. For universal consistency,
Sch\"{a}fer (2002) derived perhaps the strongest result possible.
Györfi and Sancetta (2015) review some of these results. For convergence
in probability, various authors have considered infinite AR models
and its applications, e.g. Bühlmann (1997), and Kreiss et al. (2011).
Additional references can be found in the cited papers. 

Here, we constraint the autoregressive coefficients to lie in an infinite
dimensional ellipsoid such that coefficients associated to higher
order lags decay fast. Then, we can exploit the fact that the ellipsoid
is compact under the $\ell_{2}$ norm in order to derive asymptotic
results. The conditions essentially require the autoregressive coefficients
to be absolutely summable. We shall see that the vector of autoregressive
coefficients can be seen as an element in a Reproducing Kernel Hilbert
Space (RKHS) when $\ell_{2}$ is equipped with a suitable inner product.
This allows us to exploit all the existing machinery for estimation
in RKHS and build on it (Steinwart and Chirstmann, 2008, for a comprehensive
review) . The main ingredient is penalized least square estimation.
We also consider the constrained least square problem. Penalized and
constrained estimation are dual problems for specific values of the
penalty coefficient. Our result establishes the relation between the
two problems and the consistency rates. In general, they can lead
to different consistency results under different norms. One norm is
the usual Euclidean norm of the vector of coefficients while the other
is the norm of the RKHS. We show that consistency under the latter
has important implications for prediction problems. 

In general, unlike existing results we are able to establish consistency
as both the autoregressive order and the sample size go to infinity
with no constraint on the rates. Existing results use the machinery
of method of sieve, hence they require the autoregressive order to
go to infinity in a controlled way. As already mentioned, we are able
to avoid this restriction because the ellipsoid is compact under the
Euclidean norm.

The plan for the paper is as follows. Section \ref{Section_estimationMethod}
reviews the estimation method and presents the consistency results.
A numerical example is provided in Section \ref{Section_simulations}.
Section \ref{Section_furhterRemarks} mentions extensions to other
processes such as vector autoregressive processes (VAR). The proof
of the consistency results is long and is given in Section \ref{Section_proofs}.

\section{Estimation Method\label{Section_estimationMethod}}

We restrict attention to the infinite order autoregressive process
\begin{equation}
Y_{t}=\sum_{k=1}^{\infty}\varphi_{k}Y_{t-k}+\varepsilon_{t}\label{EQ_trueModel}
\end{equation}
for some mean zero independent identically distributed (i.i.d.) sequence
$\left(\varepsilon_{t}\right)_{t\in\mathbb{Z}}$ and unknown coefficients
$\varphi_{k}$'s. This paper considers estimators of the above under
the condition that $\sum_{k=1}^{\infty}\left|\varphi_{k}\right|\leq\bar{\varphi}<\infty$. 

In a finite sample, the above model can only be approximated by the
finite dimensional model 
\[
Y_{t}=\sum_{k=1}^{K}b_{k}Y_{t-k}+\varepsilon_{t}
\]
with $K\rightarrow\infty$. While this is essentially a sieve we do
not necessarily require $K$ to be of smaller order than the sample
size. Here, we restrict the coefficients in an ellipsoid to be defined
as follows. Let $\lambda_{k}$'s be positive constants such that $\lambda_{k}\asymp k^{\lambda}$
for $\lambda>0$, where $\asymp$ means that the left hand side (l.h.s.)
and the right hand side (r.h.s.) are proportional. Define the ellipsoid
as 
\begin{equation}
\mathcal{E}_{K}\left(B\right):=\left\{ b\in\mathbb{R}^{\infty}:\sum_{k=1}^{\infty}b_{k}^{2}\lambda_{k}^{2}\leq B^{2},\,b_{k}=0\text{ for }k>K\right\} .\label{EQ_EBset}
\end{equation}
Given that the $\lambda_{k}$'s are increasing, the $b_{k}$'s need
to be smaller in absolute values as $k$ increases. Write $\mathcal{E}\left(B\right)=\bigcup_{K>0}\mathcal{E}_{K}\left(B\right)$
for the ellipsoid where all coefficients can be non-zero, $\mathcal{E}_{K}=\bigcup_{B<\infty}\mathcal{E}_{K}\left(B\right)$
and $\mathcal{E}=\bigcup_{B<\infty}\mathcal{E}\left(B\right)$, so
for example $\mathcal{E}=\left\{ b\in\mathbb{R}^{\infty}:\sum_{k=1}^{\infty}b_{k}^{2}\lambda_{k}^{2}<\infty\right\} $
is the ellipsoid that is restricted to have finite but decreasing
principal axes. The following condition will be imposed on the ellipsoid.

\begin{condition}\label{Condition_ellipsoid}The sequence $\left(Y_{t}\right)_{t\in\mathbb{Z}}$
follows the process (\ref{EQ_trueModel}) with $\varphi\in\mathcal{E}$
and $\lambda_{k}\asymp k^{\lambda}$, where $\lambda>1/2$. Moreover,
$1-\sum_{k=1}^{\infty}\varphi_{k}z^{k}=0$ only for $z$ outside the
unit circle. The innovations $\left(\varepsilon_{t}\right)_{t\in\mathbb{Z}}$
are independent identically distributed with finite fourth moment.\end{condition}

Throughout, when writing $\mathcal{E}_{K}\left(B\right)$ and similar
quantities, it is understood that the $\lambda_{k}$'s are as in Condition
\ref{Condition_ellipsoid}. The following is stated for convenience. 

\begin{lemma}\label{Lemma_bDecay}If $b\in\mathcal{E}\left(B\right)$
then, $b_{k}\lesssim k^{-\left(2\lambda+1\right)/2}/\ln^{1+\epsilon}\left(1+k\right)$
for some $\epsilon>0$, where $\lesssim$ is inequality up to a fixed
 absolute multiplicative constant.\end{lemma}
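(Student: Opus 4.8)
The plan is to read off the pointwise decay of $b$ from the single quadratic constraint that defines $\mathcal{E}(B)$. Unwinding the definitions, $b\in\mathcal{E}(B)$ means $b\in\mathcal{E}_K(B)$ for some finite $K$, so $\sum_{j\ge1}b_j^2\lambda_j^2\le B^2$, and by Condition \ref{Condition_ellipsoid} the weights satisfy $\lambda_j\asymp j^{\lambda}$ with $\lambda>1/2$; it therefore suffices to show that boundedness of $\sum_{j\ge1}b_j^2 j^{2\lambda}$ forces $|b_k|\lesssim k^{-(2\lambda+1)/2}/\ln^{1+\epsilon}(1+k)$ for a suitable $\epsilon>0$. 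Since the constraint controls only the $\lambda$-weighted $\ell_2$ mass of $b$, the first move is to pass to the non-increasing rearrangement of $(|b_j|)_j$: because $(\lambda_j^2)_j$ is non-decreasing, the rearrangement inequality shows that putting the $|b_j|$ in non-increasing order does not increase $\sum_j b_j^2\lambda_j^2$, so the reordered sequence is still in $\mathcal{E}(B)$, and we may assume from now on that $(|b_k|)_k$ is non-increasing (which is also the form in which the bound is used later).

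With $(|b_k|)_k$ non-increasing, fix $k$ and retain only the block $\lceil k/2\rceil\le j\le k$ of the constraint, using $b_j^2\ge b_k^2$ there:
\[
B^2\ \ge\ \sum_{j=\lceil k/2\rceil}^{k} b_j^2\lambda_j^2\ \ge\ b_k^2\sum_{j=\lceil k/2\rceil}^{k}\lambda_j^2 .
\]
An elementary integral comparison with $\lambda_j\asymp j^{\lambda}$ gives $\sum_{j=\lceil k/2\rceil}^{k}\lambda_j^2\gtrsim k\cdot k^{2\lambda}=k^{2\lambda+1}$ (there are $\asymp k$ terms, each of order $k^{2\lambda}$), hence $b_k^2\lesssim B^2 k^{-(2\lambda+1)}$, i.e.\ $|b_k|\lesssim B\,k^{-(2\lambda+1)/2}$. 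This already delivers the asserted rate apart from the logarithmic factor.

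To produce the extra $\ln^{-(1+\epsilon)}(1+k)$ I would sharpen the block estimate, exploiting that $\lambda>1/2$ is a strict inequality (equivalently, that the tail $\sum_{j\ge m}b_j^2\lambda_j^2\to0$ as $m\to\infty$): passing from the block mass down to $b_k^2$ then costs only a slowly vanishing factor, and $\epsilon$ is fixed in terms of how far $\lambda$ exceeds $1/2$ and of the implicit $\asymp$-constants. I expect identifying exactly which refinement yields this logarithmic denominator to be the fussiest point of the argument; by contrast, the reduction to the non-increasing sequence and the weight asymptotics are routine, so the lemma itself should be short.
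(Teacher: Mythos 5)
The paper states this lemma without proof, so there is no argument to compare against; judged on its own terms, your proposal has two genuine gaps. The first is the reduction ``we may assume $(|b_k|)_k$ is non-increasing.'' The rearrangement inequality does show that the decreasing rearrangement $b^{*}$ of $(|b_k|)_k$ satisfies $\sum_k (b^{*}_k)^2\lambda_k^2\le\sum_k b_k^2\lambda_k^2\le B^2$, so $b^{*}\in\mathcal{E}(B)$; but a pointwise bound on $b^{*}_k$ controls the $k$-th \emph{largest} entry of $b$, not the entry in position $k$, and the two are not interchangeable. Concretely, the vector with the single nonzero entry $b_{k_0}=B/\lambda_{k_0}\asymp Bk_0^{-\lambda}$ lies in $\mathcal{E}(B)$, its rearrangement is supported on $k=1$ and trivially satisfies any decay bound, yet $b_{k_0}\asymp k_0^{-\lambda}$ exceeds the target $k_0^{-(2\lambda+1)/2}$ by a factor $k_0^{1/2}$. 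This also shows that no bound of the claimed form can hold with a constant uniform over $\mathcal{E}_K(B)$, which is precisely how the lemma is invoked in the proof of Lemma \ref{Lemma_supb}. Your block argument is correct for monotone sequences, but it only yields the rearrangement-level statement $b^{*}_k\lesssim Bk^{-(2\lambda+1)/2}$.

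The second gap is the logarithmic factor, which you explicitly defer and which in fact cannot be recovered by sharpening the block estimate: the monotone sequence $b_k= c\,k^{-(2\lambda+1)/2}\ln^{-3/5}(2+k)$ (suitably normalized, and truncated at $K$ if one insists on $\mathcal{E}_K$) belongs to $\mathcal{E}(B)$ because $\sum_k k^{-1}\ln^{-6/5}(2+k)<\infty$, yet $b_k\,k^{(2\lambda+1)/2}\ln^{1+\epsilon}(1+k)\asymp\ln^{2/5+\epsilon}(k)\rightarrow\infty$ for every $\epsilon>0$. The strictness of $\lambda>1/2$ does not help, since $\lambda$ already enters the exponent $(2\lambda+1)/2$ exactly; even for monotone sequences the hypothesis gives only $b^{*}_k=o\left(k^{-(2\lambda+1)/2}\right)$ with no specified rate. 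So the statement appears to be false in both respects as written, and the places where it is used downstream (Lemma \ref{Lemma_supb}, summability of the autocovariance function, the tail bound $\left(\sum_{k>K}\varphi_k^2\right)^{1/2}\lesssim K^{-\lambda}$) are better derived directly from the Cauchy--Schwarz inequality against the weights $\lambda_k^{-1}$, which is what the ellipsoid constraint genuinely supports.
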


In consequence, Condition \ref{Condition_ellipsoid} implies absolutely
summable autoregressive coefficients. Note that absolute summability
would just require $\lambda\geq1/2$ in Condition \ref{Condition_ellipsoid}
rather than $\lambda>1/2$, hence the condition we use is a bit more
restrictive.  The following states additional properties of the model. 

\begin{lemma}\label{Lemma_ergodicityModel} Under Condition \ref{Condition_ellipsoid},
$\left(Y_{t}\right)_{t\in\mathbb{Z}}$ is stationary and ergodic with
absolutely summable autocovariance function and $\mathbb{E}Y_{t}^{4}<\infty$.\end{lemma}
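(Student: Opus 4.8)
The plan is to pass from the autoregressive representation (\ref{EQ_trueModel}) to a causal moving-average representation $Y_{t}=\sum_{j\ge0}\psi_{j}\varepsilon_{t-j}$ with $\sum_{j}\left|\psi_{j}\right|<\infty$, and then read off all four assertions from standard properties of linear processes driven by i.i.d.\ noise. To obtain the representation, write $\Phi(z)=1-\sum_{k=1}^{\infty}\varphi_{k}z^{k}$. By Lemma \ref{Lemma_bDecay}, $\varphi\in\mathcal{E}$ forces $\sum_{k}\left|\varphi_{k}\right|<\infty$ (this is all that is used, and no faster decay is available from Condition \ref{Condition_ellipsoid}), so $\Phi$ lies in the Wiener algebra of power series with absolutely summable coefficients; it is continuous on the closed unit disk, analytic in its interior, and $\Phi(0)=1$. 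Condition \ref{Condition_ellipsoid} says $\Phi$ has no zero in the closed unit disk. Wiener's $1/f$ lemma (in the form valid for this analytic Wiener algebra, whose maximal ideal space is the closed disk) then yields $1/\Phi(z)=\sum_{j\ge0}\psi_{j}z^{j}$ with $\psi_{0}=1$ and $\sum_{j}\left|\psi_{j}\right|<\infty$. Since $\sum_{j}\left|\psi_{j}\right|<\infty$ and $\mathbb{E}\varepsilon_{t}^{2}<\infty$, the series $\sum_{j\ge0}\psi_{j}\varepsilon_{t-j}$ converges almost surely and in $L^{2}$, solves (\ref{EQ_trueModel}), and is the unique stationary solution of that equation, hence coincides with $\left(Y_{t}\right)_{t\in\mathbb{Z}}$.

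Granting $Y_{t}=\sum_{j\ge0}\psi_{j}\varepsilon_{t-j}$, the rest is bookkeeping. Strict stationarity is immediate since $Y_{t}$ is a fixed measurable functional of the strictly stationary sequence $\left(\varepsilon_{s}\right)_{s\le t}$ that commutes with time shifts. For ergodicity, $\left(\varepsilon_{t}\right)_{t\in\mathbb{Z}}$ is i.i.d.\ and hence ergodic, and $\left(Y_{t}\right)_{t\in\mathbb{Z}}$ is the image of this sequence under a single measurable map applied to its shifts; such an image of an ergodic sequence is again ergodic (e.g.\ Stout (1974)). For the autocovariances, the innovations and hence the $Y_{t}$ have mean zero, and with $\sigma^{2}=\mathbb{E}\varepsilon_{t}^{2}$ we get $\gamma(h)=\sigma^{2}\sum_{j\ge0}\psi_{j}\psi_{j+\left|h\right|}$, so that $\sum_{h\in\mathbb{Z}}\left|\gamma(h)\right|\le\sigma^{2}\bigl(\sum_{j}\left|\psi_{j}\right|\bigr)^{2}<\infty$. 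Finally, Minkowski's inequality in $L^{4}$ gives $\left\Vert Y_{t}\right\Vert _{4}\le\bigl(\sum_{j}\left|\psi_{j}\right|\bigr)\left\Vert \varepsilon_{0}\right\Vert _{4}<\infty$ because $\mathbb{E}\varepsilon_{0}^{4}<\infty$; this also upgrades the convergence of the defining series to $L^{4}$.

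The step with real content is the first one: establishing that $\Phi$, known only to have absolutely summable Taylor coefficients and to be zero-free on $\left\{ \left|z\right|\le1\right\} $, has a reciprocal of the same type. Because only polynomial (not exponential) decay of the $\varphi_{k}$ is available, one cannot simply invoke analyticity of $\Phi$ on a larger disk; the Wiener-algebra version of Wiener's lemma is needed, and one should check that the hypothesis of Condition \ref{Condition_ellipsoid} --- zeros only strictly outside the unit circle --- is exactly the nonvanishing on the closed disk that the lemma requires (a zero on the unit circle would be excluded). Everything downstream is routine.
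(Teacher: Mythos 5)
Your proof is correct and takes essentially the same route as the paper: both pass to the causal MA representation $Y_{t}=\sum_{j\ge0}\psi_{j}\varepsilon_{t-j}$ with $\sum_{j}\left|\psi_{j}\right|<\infty$ and then read off stationarity, ergodicity, the summable autocovariances (via the identical bound $\sum_{h}\left|\gamma(h)\right|\le\sigma^{2}(\sum_{j}\left|\psi_{j}\right|)^{2}$), and the fourth moment. The only difference is that you supply a self-contained proof of the inversion step via Wiener's lemma for the analytic Wiener algebra, where the paper simply cites Lemma 2.1 of B\"{u}hlmann (1995) for the same fact.
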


It is well known that for the AR process, $1-\sum_{k=1}^{\infty}\varphi_{k}z^{k}=0$
only for $z$ outside the unit circle if the autocovariance function
is absolutely summable and the spectral density is strictly positive
and continuous (Kreiss et al., 2011, Corollary 2.1).

Note that there are processes (even Gaussian) that satisfy Condition
\ref{Condition_ellipsoid}, but fail to be beta mixing (Doukhan, 1995,
Theorem 3, p.59). The beta mixing assumption is often conveniently
used when proving convergence using methods from empirical process
theory. Alas, it cannot be used here. 

\subsection{Estimation and Consistency}

The goal is to find an estimator for $\varphi$. We consider two approaches:
constrained least square and penalized least square. By duality, the
two can be made to be equivalent by suitable choice of the penalty
parameter. However, in the constrained case, the penalty turns out
to be sample dependent, while in penalized estimation this it not
necessarily the case. 

To avoid notational trivialities, suppose that the sample size is
$N=n+K$. This will be assumed without further notice throughout the
paper. In particular, our sample is $Y_{-\left(K-1\right)},Y_{-\left(K-2\right)},...,Y_{0},Y_{1},...,Y_{n}$.
This also stresses the fact that $n$ and $K$ can go to infinity
at different rates.

In the constrained problem, we estimate $b\in\mathcal{E}_{K}\left(B\right)$.
The constrained estimator is defined as
\begin{equation}
b_{n}=\arg\inf_{b\in\mathcal{E}_{K}\left(B\right)}\frac{1}{n}\sum_{t=1}^{n}\left(Y_{t}-\sum_{k=1}^{\infty}b_{k}Y_{t-k}\right)^{2}\label{EQ_ConstrainedObjFunc}
\end{equation}
Of course, in the above, $\sum_{k=1}^{\infty}b_{k}Y_{t-k}=\sum_{k=1}^{K}b_{k}Y_{t-k}$
if $b\in\mathcal{E}_{K}\left(B\right)$. 

In the penalized problem, we estimate $b\in\mathcal{E}_{K}$, but
introduce the penalty parameter $\tau>0$. The penalized estimator
is defined as 
\begin{equation}
b_{n,\tau}:=\arg\inf_{b\in\mathcal{E}_{K}}\frac{1}{n}\sum_{t=1}^{n}\left(Y_{t}-\sum_{k=1}^{\infty}b_{k}Y_{t-k}\right)^{2}+\tau\sum_{k=1}^{\infty}\lambda_{k}^{2}b_{k}^{2},\label{EQ_penalizedObjFunc}
\end{equation}
where the $\lambda_{k}$'s are from the definition of $\mathcal{E}$.
By use of the Lagrangian, we can always rewrite (\ref{EQ_ConstrainedObjFunc})
as (\ref{EQ_penalizedObjFunc}) for suitable choice of $\tau$, i.e.
there is a $\tau=\tau_{B,n}$ ($\tau=0$ if the constraint it not
binding) such that $b_{n,\tau}=b_{n}$. 

Both problems can be reformulated in matrix form using the Lagrangian.
Let $X$ be the $n\times K$ dimensional matrix with $\left(t,k\right)^{th}$
entry equal to $Y_{t-k}$ and $Y$ be the $n$-dimensional vector
with $t^{th}$ entry $Y_{t}$. Also, let $\Lambda$ be the $K\times K$
diagonal matrix with $k^{th}$ diagonal entry equal to $\lambda_{k}$.
The estimator for either (\ref{EQ_ConstrainedObjFunc}) or (\ref{EQ_penalizedObjFunc})
is found by minimizing the penalized least square criterion with respect
to (w.r.t.) $\tilde{b}\in\mathbb{R}^{K}$,
\begin{equation}
\frac{1}{n}\left(Y-X\tilde{b}\right)^{T}\left(Y-X\tilde{b}\right)+\tau\tilde{b}^{T}\Lambda^{2}\tilde{b}\label{EQ_objFunc}
\end{equation}
where for (\ref{EQ_ConstrainedObjFunc}) $\tau$ is chosen so that
the constraint $\tilde{b}^{T}\Lambda\tilde{b}\leq B^{2}$ is satisfied.
In this latter case, $\tau$ is necessarily random because the constraint
needs to be satisfied in sample. Here the tilde in $\tilde{b}$ is
used to remind us that in the matrix formulation, $b$ is truncated
to be a $K$ dimensional vector, as all entries larger than $K$ are
zero by definition of $\mathcal{E}_{K}$. The solution is the usual
ridge regression estimator $\tilde{b}_{n,\tau}:=\left(X^{T}X+\tau\Lambda^{2}\right)^{-1}X^{T}Y$. 

For problem (\ref{EQ_penalizedObjFunc}), $\tau=\tau_{n}$ can go
to zero in a controlled way. For problem (\ref{EQ_ConstrainedObjFunc}),
$\tau=\tau_{B,n}\geq0$ must be chosen so that the constraint is satisfied.
Such $\tau_{B,n}$ is zero if the constraint is binding, and zero
otherwise. This is equivalent to replacing $\tau\tilde{b}^{T}\Lambda^{2}\tilde{b}$
with $\left(\tilde{b}^{T}\Lambda^{2}\tilde{b}-B^{2}\right)$ in (\ref{EQ_objFunc}),
and minimizing the so modified objective function (\ref{EQ_objFunc})
w.r.t. $\tilde{b}$ and $\tau\geq0$. The minimizer w.r.t. $\tau$
is $\tau_{B,n}$. 

All vectors are in $\mathbb{R}^{\infty}$, though only the first $K$
elements might be non-zero. The exception is when we use a tilde,
as in (\ref{EQ_objFunc}). For $b_{n}$ in (\ref{EQ_ConstrainedObjFunc}),
the Euclidean norm of $b_{n}-\varphi$ becomes $\left|b_{n}-\varphi\right|_{2}=\left(\sum_{k=1}^{K}\left|b_{nk}-\varphi_{k}\right|^{2}+\sum_{k>K}\left|\varphi_{k}\right|^{2}\right)^{1/2}$

It is worth noting that the ellipsoid $\mathcal{E}\subset\ell_{2}$
is a RKHS generated by the kernel $C\left(k,l\right)=\sum_{v=1}^{\infty}\lambda_{v}^{-2}\delta_{v,k}\delta_{v,l}$
where $\delta_{v,l}$ is the Kronecker's delta, i.e. $\delta_{v,l}=1$
if $v=l$ and zero otherwise. The inner product $\left\langle \cdot,\cdot\right\rangle _{\mathcal{E}}$
is defined to satisfy the reproducing kernel property $\left\langle C\left(\cdot,l\right),C\left(\cdot,k\right)\right\rangle _{\mathcal{E}}=C\left(k,l\right)$.
Hence for $a,b\in\mathcal{E}$, $b_{k}=\left\langle b,C\left(\cdot,k\right)\right\rangle _{\mathcal{E}}$
and $\left\langle a,b\right\rangle _{\mathcal{E}}=\sum_{v=1}^{\infty}\lambda_{v}^{2}a_{v}b_{v}$.
The norm induced by the inner product is $\left|\cdot\right|_{\mathcal{E}}$
such that for any vector $b\in\mathbb{R}^{\infty}$, $\left|b\right|_{\mathcal{E}}^{2}=\sum_{k=1}^{\infty}\lambda_{k}^{2}b_{k}^{2}$.
This norm strictly dominates the Euclidean norm. The fact that $\mathcal{E}\left(1\right)$
is compact under the Euclidean norm is a consequence of the fact that
$\mathcal{E}$ is a RKHS (Li and Linde, 1999) and sharp asymptotics
can be derived by related means (Graf and Luschgy, 2004).

Once we realize such compactness, it becomes clear that it might be
possible to estimate infinite AR processes under no restriction on
the number of estimated coefficients. We show that this conjecture
is true. We also establish convergence rates. Moreover, we want to
clearly address the relation between constrained and penalized estimation. 

The best approximation $\varphi_{K}\in\mathcal{E}_{K}$ to $\varphi$
minimizes the population mean square error 
\begin{equation}
\varphi_{K}=\arg\inf_{b\in\mathcal{E}_{K}}\mathbb{E}\left(Y_{1}-\sum_{k=1}^{\infty}b_{k}Y_{1-k}\right)\label{EQ_populationKEstimator}
\end{equation}
Despite the abuse of notation, do not confuse $\varphi_{K}$ with
the $K^{th}$ entry in $\varphi$.

\begin{theorem}\label{Theorem_consistency}Suppose that Condition
\ref{Condition_ellipsoid}, and $n,\,K\rightarrow\infty$ hold. 
\begin{enumerate}
\item (Consistency of Constrained Estimator) If $\varphi\in\mathcal{E}\left(B\right)$There
is a random $\tau=\tau_{B,n}$ such that $\tau=O_{p}\left(n^{-1/2}\right)$,
$b_{n,\tau}=b_{n}$ and if $\varphi\in\mathcal{E}\left(B\right)$,
$\left|b_{n}-\varphi\right|_{2}=O_{p}\left(n^{-\frac{1}{2}\left(\frac{2\lambda-\epsilon}{2\lambda-\epsilon+1}\right)}+K^{-\lambda}\right)$
for any $\epsilon\in\left(0,2\lambda-1\right)$. 
\item (Consistency of Penalized Estimator) Consider possibly random $\tau=\tau_{n}$
such that $\tau\rightarrow0$ and $\tau n^{1/2}\rightarrow\infty$
in probability. There is a finite $B$ such that $\varphi\in\mathrm{int}\left(\mathcal{E}\left(B\right)\right)$,
$\left|b_{n,\tau}\right|_{\mathcal{E}}<B$ eventually in probability
and $\left|b_{n,\tau}-\varphi\right|_{\mathcal{E}}\rightarrow0$ in
probability. 
\item (Approximation Error in $\mathcal{E}$) There is an $\epsilon>0$
such that $\left|\varphi-\varphi_{K}\right|_{\mathcal{E}}=O\left(\left(\ln K\right)^{-\left(1+\epsilon\right)}\right)$.
Suppose the $k^{th}$ entry $\varphi_{k}$ in $\varphi$ satisfies
$\left|\varphi_{k}\right|\lesssim k^{-\nu}$ with $\nu>\left(2\lambda+1\right)/2$
for all $k$ large enough. Then $\left|\varphi-\varphi_{K}\right|_{\mathcal{E}}=O\left(K^{\left(2\lambda+1-2\nu\right)/2}\right)$. 
\item (Estimation Error in $\mathcal{E}$) If $\left(\tau+n^{-1/2}\right)=O_{p}\left(K^{-2\lambda}\right)$,
then $\left|b_{n,\tau}-\varphi_{K}\right|_{\mathcal{E}}=O_{p}\left(n^{-1/4}K^{\lambda}\right)$ 
\item (Difference Between Norms) There is $K\rightarrow\infty$ and $\tau=O_{p}\left(n^{-1/2}\right)$
such that $\left|b_{n,\tau}-\varphi\right|_{2}\rightarrow0$ in probability,
but $\left|b_{n,\tau}-\varphi\right|_{\mathcal{E}}$ does not converge
to zero in probability. 
\end{enumerate}
\end{theorem}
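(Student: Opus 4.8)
I would exhibit a single process meeting Condition \ref{Condition_ellipsoid} together with a rate $K=K_{n}\to\infty$ along which the constrained estimator of (\ref{EQ_ConstrainedObjFunc}) is forced onto the boundary of the ellipsoid, hence stays a fixed $\left|\cdot\right|_{\mathcal{E}}$-distance from $\varphi$, even though the first part of Theorem \ref{Theorem_consistency} still gives $\ell_{2}$-consistency. Take $\varphi$ to be a stationary AR($1$) with $\varphi_{1}=\rho$, $0<\left|\rho\right|<1$, and $\varphi_{k}=0$ for $k\geq2$, driven by i.i.d. innovations of finite fourth moment; this satisfies Condition \ref{Condition_ellipsoid} and $\left|\varphi\right|_{\mathcal{E}}=\lambda_{1}\left|\rho\right|$. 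Fix any $B>\lambda_{1}\left|\rho\right|$, so that $\varphi\in\mathrm{int}\left(\mathcal{E}\left(B\right)\right)$, and put $\delta:=B-\left|\varphi\right|_{\mathcal{E}}>0$. By the first part of Theorem \ref{Theorem_consistency} there is a random $\tau=\tau_{B,n}=O_{p}\left(n^{-1/2}\right)$ with $b_{n,\tau}=b_{n}$ and $\left|b_{n}-\varphi\right|_{2}\to0$ for \emph{any} $K_{n}\to\infty$, so the $\ell_{2}$ claim is automatic and only the behaviour of $\left|b_{n}-\varphi\right|_{\mathcal{E}}$ has to be controlled. I would then choose $K_{n}=\lfloor n^{\alpha}\rfloor$ with $\tfrac{1}{2\lambda+1}<\alpha<\tfrac12$ -- a nonempty range precisely because $\lambda>1/2$ -- so that $K_{n}\to\infty$, $K_{n}=o\left(\sqrt{n}\right)$ and $K_{n}^{2\lambda+1}/n\to\infty$.

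For this model $\mathbb{E}\left(Y_{1}\mid Y_{0},\ldots,Y_{1-K}\right)=\rho Y_{0}$ for every $K\geq1$, so the best $K$-term predictor in (\ref{EQ_populationKEstimator}) is $\varphi_{K}=\varphi$, and the unconstrained least squares vector $\hat{b}_{n}=\left(X^{T}X\right)^{-1}X^{T}Y$ (well defined with probability tending to one since $K_{n}=o\left(n\right)$, by Lemma \ref{Lemma_ergodicityModel}) satisfies $\hat{b}_{n}-\varphi=\left(X^{T}X\right)^{-1}X^{T}\varepsilon$. Here $X^{T}\varepsilon=\sum_{t=1}^{n}\left(Y_{t-1},\ldots,Y_{t-K}\right)^{T}\varepsilon_{t}$ is a sum of martingale differences with covariance $n\sigma^{2}\Gamma_{K}$, $\Gamma_{K}$ the $K\times K$ autocovariance matrix of $\left(Y_{t}\right)$; Condition \ref{Condition_ellipsoid} (via Lemma \ref{Lemma_ergodicityModel} and the spectral characterization quoted after it) makes the spectral density bounded and bounded away from zero, so the eigenvalues of $\Gamma_{K}$ lie in a fixed compact subset of $\left(0,\infty\right)$ uniformly in $K$ and $\left\Vert X^{T}X/n-\Gamma_{K}\right\Vert _{\mathrm{op}}=O_{p}\left(K_{n}/\sqrt{n}\right)=o_{p}\left(1\right)$. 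Freezing $\left(X^{T}X/n\right)^{-1}$ at $\Gamma_{K}^{-1}$ (which changes the relevant quadratic form only by a factor $1+o_{p}\left(1\right)$ on a set of probability tending to one), a first-plus-second moment computation shows that, with probability tending to one,
\[
\left|\hat{b}_{n}-\varphi\right|_{\mathcal{E}}^{2}\;\gtrsim\;\frac{1}{n}\,\mathrm{tr}\left(\Gamma_{K}^{-1}\Lambda^{2}\right)\;\gtrsim\;\frac{1}{n}\sum_{k=1}^{K_{n}}\lambda_{k}^{2}\;\asymp\;\frac{K_{n}^{2\lambda+1}}{n}\;\longrightarrow\;\infty ,
\]
where $\mathrm{tr}\left(\Gamma_{K}^{-1}\Lambda^{2}\right)\gtrsim\mathrm{tr}\left(\Lambda^{2}\right)$ because $\Gamma_{K}^{-1}$ has eigenvalues bounded below, and the variance entering the concentration step is $O\left(K_{n}^{-1}\right)$ times the squared mean. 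In particular $\mathbb{P}\left(\left|\hat{b}_{n}\right|_{\mathcal{E}}>B\right)\to1$.

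On the event $\left\{ \left|\hat{b}_{n}\right|_{\mathcal{E}}>B\right\}$ the unconstrained minimizer of the strictly convex quadratic in (\ref{EQ_ConstrainedObjFunc}) is infeasible, so the constrained minimizer $b_{n}$ sits on the ellipsoid's boundary, $\left|b_{n}\right|_{\mathcal{E}}=B$, and then $\left|b_{n}-\varphi\right|_{\mathcal{E}}\geq\left|B-\left|\varphi\right|_{\mathcal{E}}\right|=\delta$ by the reverse triangle inequality. Since this event has probability tending to one, $\left|b_{n}-\varphi\right|_{\mathcal{E}}$ cannot converge to zero in probability, whereas $\left|b_{n}-\varphi\right|_{2}\to0$ and $\tau_{B,n}=O_{p}\left(n^{-1/2}\right)$, which is the assertion; the same argument goes through for any fixed $\varphi\in\mathrm{int}\left(\mathcal{E}\left(B\right)\right)$. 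The one non-routine step is the divergence of $\left|\hat{b}_{n}-\varphi\right|_{\mathcal{E}}$: it rests on the uniform-in-$K$ two-sided control of $X^{T}X/n$ coming from the bounded, strictly positive spectral density, and on the concentration of a quadratic form in the innovations -- both available from Lemma \ref{Lemma_ergodicityModel} and the fourth-moment hypothesis in Condition \ref{Condition_ellipsoid}; the remaining steps (the $\ell_{2}$ part, which is inherited from the first assertion, and the convexity fact that a binding constraint puts the minimizer on the boundary) are elementary.
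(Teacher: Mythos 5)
Your proposal addresses only Point 5 of the theorem. Points 1--4 --- which carry most of the weight and in the paper require the stochastic equicontinuity bounds (Lemmas \ref{Lemma_equicontinuityXX} and \ref{Lemma_equicontinuityEpsilonX}), the rate argument via (\ref{EQ_lossNormRelation})--(\ref{EQ_stochasticEquicontinuity}), the Steinwart--Christmann oracle inequality (Lemma \ref{Lemma_SteinwartChristmann}), the Lagrange-multiplier construction (Lemma \ref{Lemma_lagrangeMultiplier}), and the two approximation lemmas --- are not proved at all; you invoke Point 1 as an input, which is fine only once it has been established independently. As a proof of the stated theorem this is therefore substantially incomplete.

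On Point 5 itself, your argument is correct and follows the same skeleton as the paper's: force the constraint to bind for $K$ growing fast enough relative to $n$, so that $\left|b_{n}\right|_{\mathcal{E}}=B$ while $\left|\varphi\right|_{\mathcal{E}}=B-\delta$, and conclude via the norm gap (your reverse triangle inequality is the same computation the paper carries out with $\left|b_{n}-\varphi\right|_{\mathcal{E}}^{2}\geq B^{2}+\left(B-\epsilon\right)^{2}-2B\left(B-\epsilon\right)$). Where you genuinely add value is the step the paper merely asserts (``there is a $K$ large enough relatively to $n$, such that the constraint needs to be binding''): your lower bound $\left|\hat{b}_{n}-\varphi\right|_{\mathcal{E}}^{2}\gtrsim n^{-1}\mathrm{tr}\left(\Gamma_{K}^{-1}\Lambda^{2}\right)\asymp K^{2\lambda+1}/n$ for the unconstrained OLS, valid once the spectral density is bounded above and below, pins down the admissible window $\tfrac{1}{2\lambda+1}<\alpha<\tfrac12$ and makes the binding claim quantitative. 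Two caveats: the concentration of the quadratic form $\varepsilon^{T}X\Gamma_{K}^{-1}\Lambda^{2}\Gamma_{K}^{-1}X^{T}\varepsilon/n^{2}$ around its mean deserves an explicit variance computation (your $O\left(K_{n}^{-1}\right)$ ratio is right but stated without proof), and your closing remark that ``the same argument goes through for any fixed $\varphi\in\mathrm{int}\left(\mathcal{E}\left(B\right)\right)$'' is not automatic --- for a general $\varphi$ the residual $Y_{t}-X_{t}\left(\varphi_{K}\right)$ is merely uncorrelated with, not independent of, the regressors, so the second-moment step needs redoing (Lemma \ref{Lemma_sampleCovConvergenceRate} is the tool the paper uses for exactly this kind of bound). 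Since the theorem's Point 5 is a statement about the true $\varphi$ of Condition \ref{Condition_ellipsoid}, not about an exhibited example, that extension is needed, not optional.
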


Point 1 in the theorem establishes the link between constrained and
penalized estimation by finding the rate of decay of the ridge penalty
so that (\ref{EQ_ConstrainedObjFunc}) and (\ref{EQ_penalizedObjFunc})
are the same. It also establishes the convergence rate of (\ref{EQ_ConstrainedObjFunc})
towards the true $\varphi$ in terms of $\lambda$ (recall $\lambda_{k}\asymp k^{\lambda}$
in Condition \ref{Condition_ellipsoid}). This rate does not constrain
the number of lags used once we constrain $\varphi\in\mathcal{E}\left(B\right)$.
For the finite dimensional case we trivially recover the root-n convergence
by letting $\lambda\rightarrow\infty$. 

Point 2 says that if we use the penalized estimation and the penalty
does not go to zero too fast (i.e. strictly slower than in Point 1)
we can expect (\ref{EQ_penalizedObjFunc}) to be contained in a ball
in $\mathcal{E}$ that contains the true parameter with probability
going to one. Moreover, (\ref{EQ_penalizedObjFunc}) is consistent
under the norm $\left|\cdot\right|_{\mathcal{E}}$. 

Point 3 is concerned with the approximation error of (\ref{EQ_populationKEstimator})
in the RKHS norm. This error might go to zero at a logarithmic rate.
However, if the true coefficients decay fast, then we can have polynomial
convergence rate. 

Point 4 restricts the way we let $K\rightarrow\infty$ in order to
derive convergence rates of the estimation error under the norm $\left|\cdot\right|_{\mathcal{E}}$. 

Point 5 establishes an additional insight between the convergence
under the Euclidean norm and the RKHS norm in terms of the penalty.
A ``slowly convergent'' penalty is necessary for convergence under
$\left|\cdot\right|_{\mathcal{E}}$. Hence, this also shows that the
constrained estimator (whose penalty is $\tau=\tau_{B,n}=O_{p}\left(n^{-1/2}\right)$
when $\varphi\in\mathcal{E}\left(B\right)$) cannot be consistent
in the norm $\left|\cdot\right|_{\mathcal{E}}$ in general. This happens
when choosing a rather large $K$ that leads to a binding constraint
for (\ref{EQ_ConstrainedObjFunc}). 

As corollary to Points 3 and 4 in Theorem \ref{Theorem_consistency},
we have the following.

\begin{corollary}\label{Corollary_consistency}Suppose Condition
\ref{Condition_ellipsoid} holds, $K\rightarrow\infty$ and $\tau=O_{p}\left(K^{-2\lambda}\right)$. 
\begin{enumerate}
\item Choose $K\asymp n^{\kappa}$ for some $\kappa\in\left(0,1/4\right)$.
Then, there is an $\epsilon>0$ such that $\left|b_{n,\tau}-\varphi\right|_{\mathcal{E}}=O_{p}\left(\left(\ln K\right)^{-\left(1+\epsilon\right)}\right)$. 
\item Suppose the $k^{th}$ entry $\varphi_{k}$ in $\varphi$ satisfies
$\left|\varphi_{k}\right|\lesssim k^{-\nu}$ with $\nu>\left(2\lambda+1\right)/2$
for all $k$ large enough. Choose $K\asymp n^{\frac{1}{2\left(2\nu-1\right)}}$.
Then, $\left|b_{n,\tau}-\varphi\right|_{\mathcal{E}}=O_{p}\left(n^{-\frac{2\nu-\left(2\lambda+1\right)}{4\left(2\nu-1\right)}}\right)$. 
\end{enumerate}
\end{corollary}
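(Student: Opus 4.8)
The plan is to derive both statements from the triangle inequality
\[
\left|b_{n,\tau}-\varphi\right|_{\mathcal{E}}\leq\left|b_{n,\tau}-\varphi_{K}\right|_{\mathcal{E}}+\left|\varphi_{K}-\varphi\right|_{\mathcal{E}},
\]
bounding the estimation term $\left|b_{n,\tau}-\varphi_{K}\right|_{\mathcal{E}}$ by Point 4 of Theorem \ref{Theorem_consistency} and the approximation term $\left|\varphi_{K}-\varphi\right|_{\mathcal{E}}$ by Point 3, and then choosing the growth rate of $K$ so that the approximation term dominates (Part 1) or so that the two bounds are of the same order (Part 2).

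First I would check the hypothesis $\left(\tau+n^{-1/2}\right)=O_{p}\left(K^{-2\lambda}\right)$ needed by Point 4. The piece $\tau=O_{p}\left(K^{-2\lambda}\right)$ is exactly the standing assumption of the corollary, so only $n^{-1/2}=O\left(K^{-2\lambda}\right)$, i.e. $K^{2\lambda}=O\left(n^{1/2}\right)$, remains to be verified under the prescribed rate of $K$. For Part 2, with $K\asymp n^{1/\left(2\left(2\nu-1\right)\right)}$ this reads $n^{\lambda/\left(2\nu-1\right)}=O\left(n^{1/2}\right)$, which holds because $\nu>\left(2\lambda+1\right)/2$ forces $2\lambda<2\nu-1$; for Part 1 it reduces to a requirement that is met once $K$ grows slowly enough. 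In both cases Point 4 then yields $\left|b_{n,\tau}-\varphi_{K}\right|_{\mathcal{E}}=O_{p}\left(n^{-1/4}K^{\lambda}\right)$.

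For Part 1 I would substitute $K\asymp n^{\kappa}$ into the two bounds. The approximation term is $O\left(\left(\ln K\right)^{-\left(1+\epsilon\right)}\right)$ by the first claim of Point 3, while the estimation term is $O_{p}\left(n^{\kappa\lambda-1/4}\right)$, which decays polynomially and is therefore negligible against the logarithmic term; summing gives $\left|b_{n,\tau}-\varphi\right|_{\mathcal{E}}=O_{p}\left(\left(\ln K\right)^{-\left(1+\epsilon\right)}\right)$. For Part 2 the extra hypothesis $\left|\varphi_{k}\right|\lesssim k^{-\nu}$ activates the second claim of Point 3, $\left|\varphi_{K}-\varphi\right|_{\mathcal{E}}=O\left(K^{\left(2\lambda+1-2\nu\right)/2}\right)$; with $K\asymp n^{1/\left(2\left(2\nu-1\right)\right)}$ this becomes $O\left(n^{-\left(2\nu-\left(2\lambda+1\right)\right)/\left(4\left(2\nu-1\right)\right)}\right)$, and a short computation shows the estimation term $n^{-1/4}K^{\lambda}$ has exactly the same order, so this is precisely the rate of $K$ that equalizes approximation and estimation error. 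Adding the two gives the stated rate $O_{p}\left(n^{-\left(2\nu-\left(2\lambda+1\right)\right)/\left(4\left(2\nu-1\right)\right)}\right)$.

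The argument is essentially bookkeeping on exponents once Points 3 and 4 are in hand, so I do not expect a genuine obstacle; the only point that needs care is keeping the prescribed growth of $K$ consistent with the hypothesis $\left(\tau+n^{-1/2}\right)=O_{p}\left(K^{-2\lambda}\right)$ of Point 4, and, for Part 2, checking algebraically that $K\asymp n^{1/\left(2\left(2\nu-1\right)\right)}$ is the rate that balances the two error terms.
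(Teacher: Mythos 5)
Your proposal is correct and follows essentially the same route as the paper: a triangle inequality through $\varphi_{K}$, Point 4 of Theorem \ref{Theorem_consistency} for the estimation error $n^{-1/4}K^{\lambda}$, Point 3 for the approximation error, and a choice of $K$ that makes the approximation term dominate (Part 1) or exactly balances the two terms (Part 2), with the same exponent bookkeeping. The one spot you leave vague --- that for Part 1 the hypothesis $n^{-1/2}=O\left(K^{-2\lambda}\right)$ actually forces $\kappa\leq1/\left(4\lambda\right)$, which is stronger than $\kappa<1/4$ when $\lambda>1$ --- is glossed over in exactly the same way in the paper's own proof, so it is not a gap relative to the paper.
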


Corollary \ref{Corollary_consistency} imposes additional restrictions
in order to improve on the statement of Point 2 in Theorem \ref{Theorem_consistency}
by giving rates of convergence. These rates are not tight as they
require $K=o\left(n\right)$ unlike Point 2 in Theorem \ref{Theorem_consistency}.
However, they are useful in applications (e.g. Section \ref{Section_uniformConsistency}). 

Sieve estimators are often consistent under the sole condition that
the number of components (here $K$) is of smaller order of magnitude
than the sample size $n$. In Point 1 of Theorem \ref{Theorem_consistency},
we have shown that this is not required. Recall that $N=n+K$ is the
sample size. We can have $K=O\left(N\right)$ as long as $n\rightarrow\infty$.
Of course, we require knowledge concerning the magnitude of the coefficients.
Such knowledge is usually assumed in the literature in order to bound
the approximation error. 

In practice the fact that we allow $K=O\left(N\right)$ might sound
irrelevant. However, the asymptotic results can be seen as suggesting
that, once we set the constraint, the procedure used here can be more
robust to lag choice. We show this in the simulation in Section \ref{Section_simulations}.

\subsubsection{Application to Optimal Forecasting and Universal Consistency\label{Section_uniformConsistency}}

Define $X_{t}\left(a\right)=\sum_{k=1}^{\infty}a_{k}Y_{t-k}$ for
any $a\in\mathbb{R}^{\infty}$. The expectation of $Y_{t}$ conditioning
on the infinite past $\left(Y_{t-s}\right)_{s>0}$ is $X_{t}\left(\varphi\right)$.
As an application of Theorem \ref{Theorem_consistency} consider the
following problem. Show that 
\[
\sup_{t\in\mathcal{T}}\left|X_{t}\left(\varphi\right)-X_{t}\left(b_{n,\tau}\right)\right|\rightarrow0
\]
in probability where $\mathcal{T}=\left(0,\infty\right)$ or $\left(0,n\right)$
($b_{n,\tau}$ in (\ref{EQ_penalizedObjFunc})). Hence, we want $X_{t}\left(b_{n,\tau}\right)$
to be close to the conditional expectation of $Y_{t}$ uniformly in
$t\in\mathcal{T}$ , which is even more general than considering a
moving target. The norm $\left|\cdot\right|_{\mathcal{E}}$ is useful
because the previous display can be written as 
\begin{align}
\sup_{t\in\mathcal{T}}\left|X_{t}\left(\varphi-b_{n,\tau}\right)\right| & \lesssim\left|\varphi-b_{n,\tau}\right|_{\mathcal{E}}\sup_{t\in\mathcal{T}}\left(\sum_{k=1}^{\infty}\left(\frac{Y_{t-k}}{k^{\lambda}}\right)^{2}\right)^{1/2}.\label{EQ_weakUConsistency}
\end{align}
To obtain the inequality, we have multiplied and divided each term
in the sum (on the l.h.s.) by $\lambda_{k}$ and then used the Cauchy-Schwarz
inequality and Condition \ref{Condition_ellipsoid} to set $\lambda_{k}\asymp k^{\lambda}$. 

We have that $\left|\varphi-b_{n,\tau}\right|_{\mathcal{E}}=O_{p}\left(\epsilon_{n}\right)$
in probability, where $\epsilon_{n}\rightarrow0$ at rate which depends
on Theorem \ref{Theorem_consistency}. Then, if 
\begin{equation}
\sup_{t\in\mathcal{T}}\left(\sum_{k=1}^{\infty}\left(\frac{Y_{t-k}}{k^{\lambda}}\right)^{2}\right)^{1/2}=o_{p}\left(\epsilon_{n}^{-1}\right),\label{EQ_uniformVariablesweightedSum}
\end{equation}
we have shown that (\ref{EQ_weakUConsistency}) goes to zero in probability.
This is a weak form of universal consistency because the convergence
is in probability rather than almost surely. On the positive side,
the convergence holds for a variety of processes and circumstances. 

If $\mathcal{T}=\left(0,\infty\right)$ then (\ref{EQ_uniformVariablesweightedSum})
is almost surely finite if the random variables are bounded, and (\ref{EQ_weakUConsistency})
goes to zero in probability using Point 2 in Theorem \ref{Theorem_consistency}.

If $\mathcal{T}=\left(0,n\right)$, we can use the bound 
\[
\left(\mathbb{E}\sup_{t\in\left(0,n\right)}\sum_{k=1}^{\infty}Y_{t-k}^{2}k^{2\lambda}\right)^{1/2}\leq n^{1/\left(2p\right)}\sup_{t\in\left(0,n\right)}\left(\mathbb{E}\sum_{k=1}^{\infty}Y_{t-k}^{2p}k^{2\lambda p}\right)^{1/\left(2p\right)}
\]
when the variables are $2p$ integrable. If $p$ is such that $n^{1/\left(2p\right)}=o\left(\epsilon_{n}^{-1}\right)$,
then the r.h.s. of (\ref{EQ_weakUConsistency}) goes to zero in probability.
If $Y_{t}$ has moment generating function the r.h.s. of the above
display is $O\left(\ln n\right)$. Either way, to find $\epsilon_{n}$
we can use Corollary \ref{Corollary_consistency}. Note that the argument
is unchanged if $\mathcal{T}=\left(0,c_{n}\right)$ for any $c_{n}\asymp n$.

Theorem \ref{Theorem_consistency} can also be applied to the less
ambitious problem: show that 
\[
\lim_{K\rightarrow\infty}\sup_{t\in\mathcal{T}}\left|X_{t}\left(\varphi_{K}\right)-X_{t}\left(b_{n,\tau}\right)\right|\rightarrow0
\]
in probability. In this case we want to forecast as well as the increasingly
best approximation of the conditional expectation of $Y_{t}$, uniformly
in $t\in\mathcal{T}$. Point 4 in Theorem \ref{Theorem_consistency}
is suited for this problem. 

\subsection{Choice of $B$ in Practice\label{Section_choiceB}}

The parameter $B$ can be chosen to minimize some cross-validated
prediction error estimate (beware of cross-validation in a time series
context, e.g. Györfi et al., 1990, Burman and Nolan, 1992, Burman
et al., 1994, for discussions and applicability). Alternatively, one
can choose $B$ to minimize some penalized loss function such as 
\begin{equation}
\ln\hat{\sigma}_{B}^{2}+\frac{2\mathrm{df}\left(B\right)}{n}\label{EQ_Akaike}
\end{equation}
where $\mathrm{df}\left(B\right)=\mathrm{Trace}\left(\left(X^{T}X+\tau_{B,n}n\Lambda^{2}\right)^{-1}X^{T}X\right)$
and $\tau_{B,n}$ is the solution of $\tilde{b}_{n}^{T}\Lambda^{2}\tilde{b}_{n}\leq B$,
using the notation in (\ref{EQ_objFunc}). Here, $\hat{\sigma}_{B}^{2}$
is the sample variance of the residuals from the estimation. If the
constraint is binding, $\tau_{B,n}$ solves
\begin{equation}
Y^{T}X\left(X^{T}X+\tau_{B,n}n\Lambda^{2}\right)^{-2}X^{T}Y=B^{2}.\label{EQ_lagrangeMultiplierImplicit}
\end{equation}
This $\tau_{B,n}$ is then used to compute $\mathrm{df}\left(B\right)$,
which is the effective number of degrees of freedom implied by $B$
(Hastie et al., 2009) 

\section{Numerical Example\label{Section_simulations}}

Asymptotic results are of interest on their own, but it is also of
interest to understand the scope of applicability in practice. As
a benchmark, we use predictions based on an AR model where the lag
length is chosen by Akaike's Information Criterion (AIC).

\subsection{Simulated True Models}

One thousand data samples are simulated from (\ref{EQ_trueModel}).
The sample size is $N=1000$. A warm up sample of 1000 observations
is used to reduce any dependence on the starting value. We also simulate
a testing sample of $1000$ observations to approximate the mean square
error (MSE). We consider different specifications for $\varphi$ in
(\ref{EQ_trueModel}) including long memory in order to see how the
procedure works when the true model is not in $\mathcal{E}$. In this
case, an approximation error is incurred. 

\paragraph{Short Memory}

In (\ref{EQ_trueModel}), the errors are i.i.d. standard normal and
the $\varphi_{k}$'s are chosen to be $\varphi_{k}=\bar{\varphi}k^{-1/2}/\left(\sum_{k=1}^{K_{0}}k^{-1/2}\right)$
, where $\bar{\varphi}=0.75,\,0.99$. A higher value for $\bar{\varphi}$
leads to a more persistent behaviour. By construction, for both values
of $\bar{\varphi}$, the model appears to generate cycles because
the roots of $1-\sum_{k=1}^{K_{0}}\varphi_{k}z^{k}=0$ are outside
the unit circle, but complex. We shall have different values for $K_{0}\in\left\{ 100,1000\right\} $.
Given the finite number of lags the coefficients are automatically
in $\mathcal{E}$. 

\paragraph{Long Memory Model}

The model is an ARFIMA
\begin{equation}
Y_{t}=\sum_{k=1}^{K_{0}}\varphi_{k}Y_{t-k}+\left(1-L\right)^{-d}\left(\sum_{l=0}^{L}\theta_{l}\varepsilon_{t-l}\right)\label{EQ_longMemoryModel}
\end{equation}
where the $\varphi_{k}$'s are as in the previous paragraph. The MA
polynomial is $\theta_{l}=\left(1-0.1l\right)$ with $L=5$. The coefficient
of fractional integration $d=0.49$. Hence, the model is stationary,
but exhibits long memory. 

\subsection{Estimation and Results}

The parameter's estimates are obtained from (\ref{EQ_objFunc}) with
$\lambda_{k}=k^{-0.501}$. The benchmark is an AR model with lag length
chosen to minimize AIC. Denote the number of lags chosen using AIC
by $K_{AIC}$. We compare this to a model estimated using more lags,
but with coefficients constrained in $\mathcal{E}_{K}\left(B\right)$.
In particular, $K=2K_{AIC}$ and $4K_{AIC}$ with $B$ chosen as outlined
in Section \ref{Section_choiceB} . The goal is to verify whether
the procedure is robust to lag choice. AIC is known to choose large
models. We use even larger models, and verify whether we are able
to obtain sensible results. 

The results in Table \ref{Table_simulations} show the improvement
in MSE of the constrained procedure over AIC. Table \ref{Table_simulations}
shows that the procedure is robust against lag choice. This becomes
evident in the long memory case. The larger model ($4K_{AIC})$ leads
to relatively better performance when the true model exhibits persistency
as (\ref{EQ_longMemoryModel}).

\begin{longtable}{cccccc}
\caption{Simulation Results. For Short Memory the process is as in (\ref{EQ_trueModel})
with number of true AR coefficients equal to $K_{0}$ and AR coefficients
satisfying $\varphi_{k}=\bar{\varphi}k^{-1/2}/\left(\sum_{k=1}^{K_{0}}k^{-1/2}\right)$
, where $\bar{\varphi}=0.75,\,0.99$. For Long Memory, the process
is as in (\ref{EQ_longMemoryModel}). Entries denote the MSE improvement
relative to the MSE of a model with lag length $K_{AIC}$ chosen using
AIC. MSE in the numerator in the calculation of the relative improvement
is computed using lag length $2K_{AIC}$ and $4K_{AIC}$ and constraining
the coefficients in $\mathcal{E}\left(B\right)$ where $B$ is chosen
as described in Section \ref{Section_choiceB}.}
\label{Table_simulations}\tabularnewline
 &  &  &  &  & \tabularnewline
$K_{0}=$ & 100 &  & \multicolumn{2}{c}{1000} & \multicolumn{1}{c}{}\tabularnewline
 & $2K_{AIC}$ & $4K_{AIC}$ & $2K_{AIC}$ & $4K_{AIC}$ & \tabularnewline
 & \multicolumn{4}{c}{Short Memory} & \tabularnewline
$\bar{\varphi}=0.75$ & 0.99 & 0.99 & 0.99 & 0.99 & \tabularnewline
$\bar{\varphi}=0.99$ & 0.99 & 0.99 & 0.99 & 0.99 & \tabularnewline
 & \multicolumn{4}{c}{Long Memory} & \tabularnewline
$\bar{\varphi}=0.75$ & 0.93 & 0.88 & 0.94 & 0.88 & \tabularnewline
$\bar{\varphi}=0.99$ & 0.93 & 0.88 & 0.94 & 0.88 & \tabularnewline
\end{longtable}

\section{Further Remarks\label{Section_furhterRemarks}}

It is simple to impose linear restrictions on the coefficients of
either the constrained or penalized estimator. A natural example is
positivity. This is the case if we wish to estimate ARCH models of
large orders. Under ARCH restrictions, the squared returns follow
an AR process. The estimator does not have a closed form expression,
but it is just the solution of a quadratic programming problem. Another
extension pertains to vector autoregressive processes 
\begin{equation}
Y_{t}=\sum_{k=1}^{\infty}\Phi_{k}Y_{t-k}+\varepsilon_{t}\label{EQ_VARModel}
\end{equation}
where now the variables and innovations are $L$ dimensional vectors
and we use the capital $\Phi_{k}$ to stress the multivariate framework,
where $\Phi_{k}$ is an $L\times L$ matrix. Again, we can restrict
$\mathcal{E}$ in a suitable way. For example, we can impose that
$\Phi_{k}$ is lower triangular. This restriction has a variety of
implications going from Granger causality to exogeneity and it is
of much interest in econometrics (e.g., Sims, 1980). For fixed $L$,
all the results in this paper apply to this problem as well, with
obvious changes if we modify the constraint to $\sum_{k=1}^{\infty}\left|\Phi_{k}\right|^{2}\lambda_{k}^{2}\leq B$
where $\left|\Phi_{k}\right|$ is any matrix norm, e.g., Frobenius:
$\left|\Phi_{k}\right|=\sqrt{Trace\left(\Phi_{k}^{T}\Phi_{k}\right)}$,
where $\Phi_{k}^{T}$ is the transpose of $\Phi_{k}$. 

An extension, which does not follow directly from the results derived
here, is to consider the case where $L\rightarrow\infty$. This is
the problem where we have a large cross-section ($L$ is the dimensional
of the vector $Y_{t}$ in (\ref{EQ_VARModel})). In this case, the
constraint cannot use an arbitrary matrix norm (norms are not equivalent
in infinite dimensional spaces). Results in Lutz and Bühlmann (2006)
together with the ones derived here can provide initial guidance on
how to tackle this problem in the future.

\section{Proofs\label{Section_proofs}}

At first we include the short proof of Lemma \ref{Lemma_ergodicityModel}

\begin{proof}  {[}Lemma \ref{Lemma_ergodicityModel}{]}A stationary
infinite AR process with absolutely summable AR coefficients has an
infinite MA representation with absolutely summable coefficient and
it is invertible (Lemma 2.1 in Bühlmann, 1995). Hence, there are coefficients
$\psi_{s}$'s such that $Y_{t}=\sum_{s=0}^{\infty}\psi_{s}\varepsilon_{t-s}$
and 
\[
\sum_{k=1}^{\infty}\left|\mathbb{E}Y_{t}Y_{t-k}\right|\leq\sigma^{2}\sum_{k=1}^{\infty}\sum_{s=0}^{\infty}\left|\psi_{s+k}\right|\left|\psi_{s}\right|<\infty,
\]
which means that the autocovariance function is absolutely summable.
The moment bound follows from the infinite MA representation and the
bound on the fourth moment of the innovations.\end{proof}

\subsection{Proof of Theorem \ref{Theorem_consistency}}

We divide the proof into two parts. One only concerns results under
the Euclidean norm. The other is concerned with convergence results
under the RKHS norm. 

\subsubsection{Consistency Under the Euclidean Norm}

Few lemmas are needed for the proof. Throughout, we shall use the
notation $X_{t}\left(a\right)=\sum_{k=1}^{\infty}a_{k}Y_{t-k}$ for
any $a\in\mathbb{R}^{\infty}$.

\begin{lemma}\label{Lemma_supb}For $\rho:=\left(2\lambda+1\right)/2>1$
$(\lambda>1/2$ as in Condition \ref{Condition_ellipsoid}) and real
constants $w_{k}$'s, $\sup_{b\in\mathcal{E}_{K}\left(B\right)}\left|\sum_{k=1}^{K}b_{k}w_{k}\right|\lesssim\sum_{k=1}^{K}k^{-\rho}\left|w_{k}\right|,$
and similarly, for real constants $w_{k,l}$'s, $\sup_{b\in\mathcal{E}_{K}\left(B\right)}\left|\sum_{k,l=1}^{K}b_{k}b_{l}w_{lk}\right|\lesssim\sum_{k,l=1}^{K}k^{-\rho}l^{-\rho}\left|w_{kl}\right|$
.\end{lemma}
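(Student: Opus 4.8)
The plan is to derive both inequalities from a single uniform coefficient‑decay bound on the ellipsoid together with the triangle inequality. The key observation is that $\mathcal{E}_{K}(B)\subseteq\mathcal{E}(B)$, so Lemma \ref{Lemma_bDecay} applies to \emph{every} $b\in\mathcal{E}_{K}(B)$: for each such $b$ there is an $\epsilon=\epsilon(b)>0$ with $\left|b_{k}\right|\lesssim k^{-\rho}/\ln^{1+\epsilon}(1+k)$, and by the convention on $\lesssim$ the multiplicative constant is independent of $b$. I would then discard the logarithmic factor, which only improves the bound: for $k\geq2$ one has $\ln(1+k)\geq\ln3>1$, hence $\ln^{1+\epsilon}(1+k)\geq1$ for every $\epsilon>0$, so $\left|b_{k}\right|\lesssim k^{-\rho}$ with a constant independent of both $b$ and $\epsilon$; the single index $k=1$ is absorbed using the crude bound $\left|b_{1}\right|\leq B/\lambda_{1}\lesssim1=1^{-\rho}$. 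This yields $\sup_{b\in\mathcal{E}_{K}(B)}\left|b_{k}\right|\lesssim k^{-\rho}$ uniformly in $k$.

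With this in hand the two bounds are immediate. For the linear functional, for any fixed $b\in\mathcal{E}_{K}(B)$ the triangle inequality and the pointwise estimate give $\left|\sum_{k=1}^{K}b_{k}w_{k}\right|\leq\sum_{k=1}^{K}\left|b_{k}\right|\left|w_{k}\right|\lesssim\sum_{k=1}^{K}k^{-\rho}\left|w_{k}\right|$, with a constant not depending on $b$, and taking the supremum over $b\in\mathcal{E}_{K}(B)$ yields the first claim. For the bilinear form I would apply the same pointwise bound in each factor separately: for fixed $b\in\mathcal{E}_{K}(B)$, $\left|\sum_{k,l=1}^{K}b_{k}b_{l}w_{lk}\right|\leq\sum_{k,l=1}^{K}\left|b_{k}\right|\left|b_{l}\right|\left|w_{kl}\right|\lesssim\sum_{k,l=1}^{K}k^{-\rho}l^{-\rho}\left|w_{kl}\right|$, and again take the supremum over $b$.

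There is no genuinely hard step once Lemma \ref{Lemma_bDecay} is available; the only point deserving care is the uniformity of the constant over the whole ellipsoid — that the $\epsilon$ produced by Lemma \ref{Lemma_bDecay} may depend on $b$ but can nonetheless be dropped, so that $\sup_{b\in\mathcal{E}_{K}(B)}\left|b_{k}\right|\lesssim k^{-\rho}$ holds with a constant depending only on $B$ and the sequence $(\lambda_{k})$, which is what makes the passage to the supremum legitimate. It is worth noting that a direct Cauchy--Schwarz against the defining constraint $\sum_{k}b_{k}^{2}\lambda_{k}^{2}\leq B^{2}$ only produces the weaker weight $k^{-\lambda}$; obtaining the exponent $\rho=(2\lambda+1)/2$ is exactly what requires the sharper pointwise decay supplied by Lemma \ref{Lemma_bDecay}.
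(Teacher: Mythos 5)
Your argument is essentially the paper's own proof: the paper likewise writes $\left|\sum_{k}b_{k}w_{k}\right|\leq\sum_{k}\frac{\left|b_{k}\right|}{k^{-\rho}}k^{-\rho}\left|w_{k}\right|$, invokes Lemma \ref{Lemma_bDecay} to assert $\left|b_{k}\right|\lesssim k^{-\rho}$ uniformly over $b\in\mathcal{E}_{K}\left(B\right)$, and concludes by the triangle inequality, treating the bilinear case identically; you are merely more explicit about discarding the logarithmic factor and about the $\epsilon$ possibly depending on $b$. One caution, which your own closing remark about Cauchy--Schwarz nearly uncovers: the uniformity that both you and the paper extract from Lemma \ref{Lemma_bDecay} is the genuinely delicate point, since for each fixed $m$ the extremal element of the ellipsoid (all mass on coordinate $m$) attains $\left|b_{m}\right|=B/\lambda_{m}\asymp m^{-\lambda}$, which exceeds $m^{-\rho}$ by a factor $m^{1/2}$ --- so a truly $b$-uniform pointwise bound at rate $k^{-\rho}$ cannot hold, and this is a defect inherited from the paper's reading of Lemma \ref{Lemma_bDecay} rather than one you introduced.
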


\begin{proof}Note that $\left|\sum_{k=1}^{K}b_{k}w_{k}\right|\leq\sum_{k,l=1}^{K}\frac{\left|b_{k}\right|}{k^{-\rho}}k^{-\rho}\left|w_{k}\right|$.
Given that $b\in\mathcal{E}_{K}\left(B\right)$, then $b_{k}\lesssim k^{-\rho}$
uniformly in $b\in\mathcal{E}_{K}\left(B\right)$, by Lemma \ref{Lemma_bDecay}.
This implies that the previous quantity is bounded by a constant multiple
of $\sum_{k=1}^{K}k^{-\rho}\left|w_{k}\right|$. The same argument
proves the second statement in the lemma\end{proof}

The $w_{kl}$'s in the lemma above will be partial sums of cross products
of $Y_{t}$'s, which we bound using the following. 

For arbitrary $\tau>0$, the first order conditions that define (\ref{EQ_penalizedObjFunc})
imply that 
\begin{equation}
b_{n,\tau,k}=-\frac{1}{2\tau\lambda_{k}^{2}}\frac{1}{n}\sum_{t=1}^{n}\left(Y_{t}-X_{t}\left(b_{n,\tau}\right)\right)Y_{t-k}\label{EQ_btauRepresentation}
\end{equation}
where $b_{n,\tau,k}$ is the $k^{th}$ element in $b_{n,\tau}$. By
Condition \ref{Condition_ellipsoid}, multiplying both sides by $2\tau\lambda_{k}^{2}a_{k}$
and summing over $k$,
\begin{align}
\left|\frac{1}{n}\sum_{t=1}^{n}\left(Y_{t}-X_{t}\left(b_{n,\tau}\right)\right)X_{t}\left(a\right)\right| & =2\tau\left|\sum_{k=1}^{K}\lambda_{k}^{2}b_{n,\tau,k}a_{k}\right|\nonumber \\
 & \leq2\tau\sqrt{\sum_{k=1}^{K}\lambda_{k}^{2}b_{n,\tau,k}^{2}}\sqrt{\sum_{k=1}^{K}\lambda_{k}^{2}a_{k}^{2}},\label{EQ_firstOrderCondBound}
\end{align}
recalling the definition of $X_{t}\left(a\right)$ and using the Cauchy-Schwarz
inequality. If $a\in\mathcal{E}_{K}\left(1\right)$, $\sqrt{\sum_{k=1}^{K}\lambda_{k}^{2}a_{k}^{2}}\leq1$
and the above display clearly holds uniformly in $a$. We need to
show that there is a $\tau=\tau_{n}=O_{p}\left(n^{-1/2}\right)$ such
$\sqrt{\sum_{k=1}^{K}\lambda_{k}^{2}b_{n,\tau,k}^{2}}<B$. This will
imply the display in the statement of the lemma.

\begin{lemma}\label{Lemma_sampleCovConvergenceRate}Under Condition
\ref{Condition_ellipsoid},

\[
\sup_{n,k,l>0}\mathbb{E}\left|\frac{1}{\sqrt{n}}\sum_{t=1}^{n}\left(1-\mathbb{E}\right)Y_{t-k}Y_{t-l}\right|^{2}<\infty.
\]

\end{lemma}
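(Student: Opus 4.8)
The plan is to expand the second moment as a double sum over time indices and control the fourth-order mixed cumulants of the $Y_t$'s via the MA$(\infty)$ representation guaranteed by Lemma \ref{Lemma_ergodicityModel}. Write $Z_t := Y_{t-k}Y_{t-l} - \mathbb{E}Y_{t-k}Y_{t-l}$, so that the quantity to bound is $\frac{1}{n}\sum_{s,t=1}^n \mathbb{E}Z_sZ_t = \frac{1}{n}\sum_{|h|<n}(n-|h|)\,\mathrm{Cov}(Z_0,Z_h)$, which is at most $\sum_{h\in\mathbb{Z}}|\mathrm{Cov}(Z_0,Z_h)|$. So it suffices to show this last sum is finite \emph{uniformly} in $k,l$. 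First I would use the representation $Y_t = \sum_{j\ge 0}\psi_j\varepsilon_{t-j}$ with $\sum_j|\psi_j|<\infty$ (from Lemma \ref{Lemma_ergodicityModel}; the absolute summability of $\psi$ is exactly what B\"uhlmann (1995), Lemma 2.1, gives). Then $\mathrm{Cov}(Y_{t-k}Y_{t-l},\,Y_{-k}Y_{-l})$ is, by the standard product-moment formula for linear processes, a sum of two ``covariance $\times$ covariance'' terms plus one fourth-cumulant term proportional to $(\mathbb{E}\varepsilon_0^4 - 3\sigma^4)\sum_j \psi_{?}\psi_{?}\psi_{?}\psi_{?}$, the indices being shifts of $j$ by $k,l,t-k,t-l$.

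The key step is then to bound $\sum_h$ of each of these three pieces and check the bound does not depend on $k,l$. For the covariance-product terms: $|\mathrm{Cov}(Y_{t-k},Y_{-k})\mathrm{Cov}(Y_{t-l},Y_{-l})| = |\gamma(t)||\gamma(t)| = \gamma(t)^2$ (or $\gamma(t-k+l)\gamma(t-l+k)$ for the cross term), where $\gamma$ is the autocovariance function — which is absolutely summable and, being a covariance sequence, bounded, so $\sum_t\gamma(t)^2 \le \gamma(0)\sum_t|\gamma(t)| < \infty$, and crucially this is free of $k,l$. For the cross term $\sum_t\gamma(t-k+l)\gamma(t+k-l)$ one uses Cauchy--Schwarz in $t$ to bound it by $\sum_t\gamma(t)^2$ again, still uniformly in $k,l$. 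For the fourth-cumulant term, one bounds $\sum_t\big|\sum_j \psi_j\psi_{j+k-l}\psi_{j+t}\psi_{j+t+k-l}\big|$ by pulling the $|\psi|$'s through: it is at most $\big(\sum_j|\psi_j|\big)^2 \sum_t \sup_j|\psi_{j+t}\psi_{j+t+k-l}| \le \big(\sum_j|\psi_j|\big)^2\,\|\psi\|_\infty \sum_r|\psi_r| < \infty$, once more with a bound independent of $k,l$. Summing the three contributions and using $|\mathbb{E}\varepsilon_0^4 - 3\sigma^4|<\infty$ (finite fourth moment in Condition \ref{Condition_ellipsoid}) gives the claim.

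The main obstacle I anticipate is purely bookkeeping: getting the product-moment expansion of $\mathrm{Cov}(Y_{t-k}Y_{t-l},Y_{-k}Y_{-l})$ right, i.e.\ correctly identifying which index shifts appear in the three terms, and then making each of the three resulting sums visibly uniform in $k$ and $l$ rather than merely finite for each fixed pair. There is no analytic difficulty — no mixing, no empirical-process machinery — only the care needed to ensure that every constant that appears is a function of $\|\psi\|_1$, $\|\psi\|_\infty$, $\gamma(0)$, $\sum|\gamma|$ and $\mathbb{E}\varepsilon_0^4$ alone. An alternative, slightly slicker route avoids cumulants entirely: bound $\mathrm{Cov}(Z_0,Z_h)$ directly by $\mathrm{Var}(Z_0)^{1/2}\mathrm{Var}(Z_h)^{1/2}$ for small $|h|$ and, for large $|h|$, exploit that $Z_0$ and $Z_h$ depend on nearly disjoint blocks of innovations — but quantifying ``nearly'' still forces one back to $\sum|\psi_j|$ tail estimates, so the cumulant expansion is cleanest and I would use it.
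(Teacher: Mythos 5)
Your proposal follows essentially the same route as the paper: reduce the variance of the partial sum to $\sum_{h}\left|\mathrm{Cov}(Z_{0},Z_{h})\right|$ by stationarity, expand via the MA$(\infty)$ representation $Y_{t}=\sum_{j}\psi_{j}\varepsilon_{t-j}$ with $\sum_{j}\left|\psi_{j}\right|<\infty$, and control three types of terms uniformly in $k,l$ using summability of $\gamma$ and of $\psi$. The paper enumerates the nonzero index configurations of the $\varepsilon$'s directly and aims for a pointwise bound of order $\psi_{s}$ (or $\psi_{s}^{2}$) on the $s$-th covariance before summing, whereas you invoke the standard product-moment formula (two covariance-product terms plus a fourth-cumulant term) and sum each piece; these are the same computation organized differently, and your treatment of the two $\gamma\times\gamma$ terms (boundedness of $\gamma$ plus absolute summability, and Cauchy--Schwarz in $t$ for the cross term) is correct and visibly free of $k,l$.

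The one step that does not survive scrutiny as written is the bound on the cumulant term, $\sum_{t}\sup_{j}\left|\psi_{j+t}\psi_{j+t+k-l}\right|\leq\left\Vert \psi\right\Vert _{\infty}\sum_{r}\left|\psi_{r}\right|$. The supremum over $j$ can select the same large coefficient for many different values of $t$ (take $\psi_{r}=r^{-1/2}$ on a very sparse set and zero elsewhere: $\sum_{r}\left|\psi_{r}\right|<\infty$ yet $\sum_{t}\sup_{r\geq t}\psi_{r}^{2}=\infty$), so absolute summability of $\psi$ alone does not justify this inequality. The repair is immediate and stays within your framework: bound the two factors carrying the shift $k-l$ by $\left\Vert \psi\right\Vert _{\infty}$ and sum the remaining product over $j$ and $t$,
\[
\sum_{t}\sum_{j}\left|\psi_{j}\right|\left|\psi_{j+k-l}\right|\left|\psi_{j+t}\right|\left|\psi_{j+t+k-l}\right|\leq\left\Vert \psi\right\Vert _{\infty}^{2}\sum_{j}\left|\psi_{j}\right|\sum_{t}\left|\psi_{j+t}\right|\leq\left\Vert \psi\right\Vert _{\infty}^{2}\left\Vert \psi\right\Vert _{1}^{2},
\]
which is finite and independent of $k,l$. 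With that substitution your argument is complete and matches the paper's in substance.
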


\begin{proof}From the proof of Lemma \ref{Lemma_ergodicityModel},
there are absolutely summable coefficients $\psi_{u}$'s, such that
$Y_{t}=\sum_{u=0}^{\infty}\psi_{u}\varepsilon_{t-u}$. For ease of
notation suppose that the i.i.d. innovations have variance one and
the MA coefficients are non-negative. By stationarity, 
\[
\mathbb{E}\left|\frac{1}{\sqrt{n}}\sum_{t=1}^{n}\left(1-\mathbb{E}\right)Y_{t-k}Y_{t-l}\right|^{2}\leq2\sum_{s=0}^{n}\mathbb{E}\left[\left(1-\mathbb{E}\right)Y_{t-k}Y_{t-l}\right]\left[\left(1-\mathbb{E}\right)Y_{t-s-k}Y_{t-s-l}\right],
\]
where the r.h.s. holds for any $t$. If we showed that 
\[
\mathbb{E}\left[\left(1-\mathbb{E}\right)Y_{t-k}Y_{t-l}\right]\left[\left(1-\mathbb{E}\right)Y_{t-s-k}Y_{t-s-l}\right]\lesssim\psi_{s}
\]
the result would follow by summability of the coefficients. To show
the above, with no loss of generality, by symmetry, consider only
the case $l\geq k$. This implies that
\begin{eqnarray*}
 &  & \mathbb{E}\left[\left(1-\mathbb{E}\right)Y_{t-k}Y_{t-l}\right]\left[\left(1-\mathbb{E}\right)Y_{t-s-k}Y_{t-s-l}\right]\\
 & = & Cov\left(Y_{t-k}Y_{t-l},Y_{t-s-k}Y_{t-s-l}\right)\\
 & = & \mathbb{E}\sum_{u_{1}=0}^{\infty}\sum_{u_{2}=0}^{\infty}\psi_{u_{1}}\psi_{u_{2}}\left[\left(1-\mathbb{E}\right)\varepsilon_{t-k-u_{1}}\varepsilon_{t-l-u_{2}}\right]\\
 &  & \times\sum_{u_{3}=0}^{\infty}\sum_{u_{4}=0}^{\infty}\psi_{u_{3}}\psi_{u_{4}}\left[\left(1-\mathbb{E}\right)\varepsilon_{t-s-k-u_{3}}\varepsilon_{t-s-l-u_{4}}\right].
\end{eqnarray*}
The above is equal to 
\[
\sum_{u_{1}=0}^{\infty}\sum_{u_{2}=0}^{\infty}\sum_{u_{3}=0}^{\infty}\sum_{u_{4}=0}^{\infty}\psi_{u_{1}}\psi_{u_{2}}\psi_{u_{3}}\psi_{u_{4}}Cov\left(\varepsilon_{t-k-u_{1}}\varepsilon_{t-l-u_{2}},\varepsilon_{t-s-k-u_{3}}\varepsilon_{t-s-l-u_{4}}\right).
\]
By the i.i.d. condition on the innovations, the covariance is zero
if the indexes are not constrained in the following sets $\left\{ k+u_{1}=l+u_{2},\,k+u_{3}=l+u_{4}\right\} $,
$\left\{ u_{1}=u_{3}+s,\,u_{2}=u_{4}+s\right\} $, $\left\{ k+u_{1}=l+u_{4}+s,\,l+u_{2}=k+u_{3}+s\right\} $.
Hence, we can consider summation with indexes in these sets only.
Splitting the sum according to the above index sets, we have respectively,
\[
I=\sum_{u=0}^{\infty}\sum_{v=0}^{\infty}\psi_{u+l-k}\psi_{u}\psi_{v+l-k}\psi_{v}Cov\left(\varepsilon_{0}^{2},\varepsilon_{u-\left(s+v\right)}^{2}\right),
\]
\[
II=\sum_{u=0}^{\infty}\sum_{v=0}^{\infty}\psi_{u+s}\psi_{v+s}\psi_{u}\psi_{v}\mathbb{E}\varepsilon_{0}^{2}\varepsilon_{\left(u-v\right)+\left(k-l\right)}^{2},
\]
\[
III=\sum_{u=0}^{\infty}\sum_{v=0}^{\infty}\psi_{u+s+\left(l-k\right)}\psi_{v+s+\left(k-l\right)}\psi_{u}\psi_{v}\mathbb{E}\varepsilon_{0}^{2}\varepsilon_{\left(u-v-s\right)+\left(k-l\right)}^{2}.
\]
By elementary change of indexes, 
\begin{eqnarray*}
I & \leq & \sum_{u=0}^{\infty}\sum_{v=0}^{\infty}\psi_{u+l-k}\psi_{u}\psi_{v+l-k}\psi_{v}1_{\left\{ u-v=s\right\} }\leq\sum_{v=0}^{\infty}\psi_{v+s+l-k}\psi_{v+s}\psi_{v+\left(l-k\right)}\psi_{v}\\
 & \leq & 2\sum_{v=0}^{\infty}\psi_{v+s}^{2}\psi_{v}^{2}\lesssim\psi_{s}^{2}.
\end{eqnarray*}
Similarly, deduce that 
\[
II\lesssim\left(\sum_{u=0}^{\infty}\psi_{u}\psi_{u+s}\right)^{2}\leq\psi_{s}^{2}\left(\sum_{u=0}^{\infty}\psi_{u}\right)^{2}\lesssim\psi_{s}^{2}.
\]
Finally, 
\[
III\lesssim\sum_{u=0}^{\infty}\sum_{v=0}^{\infty}\psi_{u}\psi_{v}\psi_{u+s+\left(l-k\right)}\psi_{v+s+\left(k-l\right)}\leq\psi_{s}\left(\sum_{u=0}^{\infty}\sum_{v=0}^{\infty}\psi_{v}\psi_{u}\right)\lesssim\psi_{s}.
\]
The bounds do not depend on $k,l$ beyond the fact that $l\geq k$.
Repeating the argument for $k>l$, the result follows. \end{proof}

Lemma \ref{Lemma_sampleCovConvergenceRate} will be used to bound
quantities such as the following 
\begin{align*}
 & \mathbb{E}\left|\sum_{k,l=1}^{\infty}k^{-\left(2\lambda+1\right)/2}l^{-\left(2\lambda+1\right)/2}\frac{1}{n}\sum_{t=1}^{n}\left(1-\mathbb{E}\right)Y_{t-k}Y_{t-l}\right|\\
\leq & \sum_{k,l=1}^{\infty}k^{-\left(2\lambda+1\right)/2}l^{-\left(2\lambda+1\right)/2}\mathbb{E}\left|\frac{1}{n}\sum_{t=1}^{n}\left(1-\mathbb{E}\right)Y_{t-k}Y_{t-l}\right|\\
\lesssim & \frac{1}{\sqrt{n}}\max_{k,l>0}\mathbb{E}\left|\frac{1}{\sqrt{n}}\sum_{t=1}^{n}\left(1-\mathbb{E}\right)Y_{t-k}Y_{t-l}\right|,
\end{align*}
where the second inequality follows because $\left(2\lambda+1\right)/2>1$.
Then, by Lemma \ref{Lemma_sampleCovConvergenceRate} the expectation
is finite because $\mathbb{E}\left|\cdot\right|\leq\left(\mathbb{E}\left|\cdot\right|^{2}\right)^{1/2}$
and it is independent of $k,l$ by stationarity. In consequence the
display is $O_{p}\left(n^{-1/2}\right)$ because convergence in $L_{1}$
implies convergence in probability. 

To establish convergence rates we need two stochastic equicontinuity
results.

\begin{lemma}\label{Lemma_equicontinuityXX}Under Condition \ref{Condition_ellipsoid},
for any $\epsilon>0$

\begin{equation}
\mathbb{E}\sup_{a,b\in\mathcal{E}\left(2B\right),\left|b\right|_{2}\leq\delta}\left|\frac{1}{\sqrt{n}}\sum_{t=1}^{n}\left(1-\mathbb{E}\right)X_{t}\left(b\right)X_{t}\left(a\right)\right|\lesssim\delta^{\frac{2\lambda-\epsilon-1}{2\lambda-\epsilon}}.\label{EQ_equicontinuityMomentEqNuisance}
\end{equation}

\end{lemma}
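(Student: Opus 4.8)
Write $\nu_n(a,b):=\frac{1}{\sqrt n}\sum_{t=1}^n(1-\mathbb{E})X_t(b)X_t(a)$ and note that it is the bilinear form $\nu_n(a,b)=\sum_{k,l\ge1}b_ka_lZ_{kl}$, where $Z_{kl}:=\frac{1}{\sqrt n}\sum_{t=1}^n(1-\mathbb{E})Y_{t-k}Y_{t-l}$. Set $\rho:=(2\lambda+1)/2$; by Condition \ref{Condition_ellipsoid}, $\rho>1$ and $2\lambda>1$, so $\sum_l l^{-\rho}<\infty$ and $\sum_l\lambda_l^{-2}<\infty$. The only stochastic ingredient I use is Lemma \ref{Lemma_sampleCovConvergenceRate}, in the form $C_0:=\sup_{n,k,l}\mathbb{E}Z_{kl}^2<\infty$. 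The plan is to split $b$ at a frequency level $M$ into a low block $b'$ (first $M$ coordinates) and a tail $b''$, bound the block using the dual of the $|\cdot|_{\mathcal{E}}$-ball together with $|b'|_2\le\delta$, bound the tail using the coefficient decay of Lemma \ref{Lemma_bDecay}, and finally optimise $M$ as a function of $\delta$.

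For fixed $M$ and fixed $b$, the block contribution is $\nu_n(a,b')=\sum_l a_l g_l$ with $g_l:=\sum_{k\le M}b_kZ_{kl}$, so Cauchy--Schwarz applied to $\sum_l(\lambda_la_l)(\lambda_l^{-1}g_l)$ gives $\sup_{a\in\mathcal{E}(2B)}|\nu_n(a,b')|\le 2B\big(\sum_l\lambda_l^{-2}g_l^2\big)^{1/2}$. Cauchy--Schwarz in $k$ and $\sum_{k\le M}b_k^2\le|b|_2^2\le\delta^2$ then give $\sum_l\lambda_l^{-2}g_l^2\le\delta^2\sum_{k\le M}\sum_l\lambda_l^{-2}Z_{kl}^2$, a bound free of $b$; taking expectations, using $\mathbb{E}(\cdot)^{1/2}\le(\mathbb{E}\,\cdot)^{1/2}$ and $\sum_l\lambda_l^{-2}<\infty$, I obtain $\mathbb{E}\sup_{a,b}|\nu_n(a,b')|\lesssim\delta\sqrt M$. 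For the tail I bound crudely $|\nu_n(a,b'')|\le\sum_{k>M}\sum_l|b_k||a_l||Z_{kl}|$, and since Lemma \ref{Lemma_bDecay} gives $\sup_{b\in\mathcal{E}(2B)}|b_k|\lesssim k^{-\rho}$, $\sup_{a\in\mathcal{E}(2B)}|a_l|\lesssim l^{-\rho}$, while $\mathbb{E}|Z_{kl}|\le C_0^{1/2}$, this yields $\mathbb{E}\sup_{a,b}|\nu_n(a,b'')|\lesssim\big(\sum_{k>M}k^{-\rho}\big)\big(\sum_l l^{-\rho}\big)\lesssim M^{1-\rho}$.

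Adding the two, $\mathbb{E}\sup_{a,b}|\nu_n(a,b)|\lesssim\delta\sqrt M+M^{1-\rho}$ for every $M\ge1$. These terms balance at $M\asymp\delta^{-1/\lambda}$, which is an admissible choice for $\delta$ small (for $\delta$ bounded away from $0$ the asserted inequality is trivial, the left side being bounded by an absolute constant independent of $n$, as $\sum_{k,l}k^{-\rho}l^{-\rho}<\infty$), and there $\delta\sqrt M\asymp M^{1-\rho}\asymp\delta^{1-1/(2\lambda)}$. Hence $\mathbb{E}\sup_{a,b}|\nu_n(a,b)|\lesssim\delta^{(2\lambda-1)/(2\lambda)}$, which is at most a constant times $\delta^{(2\lambda-\epsilon-1)/(2\lambda-\epsilon)}$ for any $\epsilon>0$ (since $\delta\le1$ and $\frac{2\lambda-1}{2\lambda}\ge\frac{2\lambda-\epsilon-1}{2\lambda-\epsilon}$), giving the stated bound.

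The one genuinely delicate point is the tail. Treating it with the RKHS dual norm, as for the block, would force summation of $\mathbb{E}Z_{kl}^2$ over all $k>M$, which diverges; so the tail must instead be controlled through the $\ell_1$-summability of ellipsoid coefficients, whose decay $k^{-\rho}$ has exponent $\rho-1=(2\lambda-1)/2$ that degrades as $\lambda\downarrow1/2$. It is precisely the trade-off between this slow tail decay and the $\delta\sqrt M$ growth of the block term that fixes both the truncation level and the exponent $1-1/(2\lambda)$; a dyadic split of $b$ over frequency blocks, rather than a single cut, would reach the same exponent and is presumably what produces the harmless $\epsilon$ in the statement.
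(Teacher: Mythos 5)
Your proof is correct and follows the paper's strategy in its essentials: reduce to the bilinear form in $Z_{kl}=\frac{1}{\sqrt n}\sum_{t}(1-\mathbb{E})Y_{t-k}Y_{t-l}$, dispose of the $a$-variable by a summable weight, truncate the $b$-sum at a level $M$, bound the low block by $\delta\sqrt M$ via Cauchy--Schwarz against $|b|_2\le\delta$ and Lemma \ref{Lemma_sampleCovConvergenceRate}, bound the tail by the ellipsoid decay, and balance. Two steps differ. First, you handle $a$ with the dual of the $|\cdot|_{\mathcal E}$-ball rather than the pointwise decay of Lemma \ref{Lemma_supb}; this is equivalent since both reduce to a summable weight times $\sup_{k,l}\mathbb{E}Z_{kl}^2$. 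Second, and more substantively, your tail bound uses the uniform pointwise decay $\sup_{b\in\mathcal E(2B)}|b_k|\lesssim k^{-\rho}$ with $\rho=(2\lambda+1)/2$ in an $\ell_1$ fashion, which yields the cleaner exponent $\frac{2\lambda-1}{2\lambda}$ with no $\epsilon$; the paper instead applies a weighted Cauchy--Schwarz ($\sum_{k>V}|b_k||\cdot|\le(\sum_{k>V}b_k^2k^{1+\epsilon})^{1/2}(\sum_{k>V}k^{-(1+\epsilon)}|\cdot|^2)^{1/2}$) that uses only $\sum_{k>V}\lambda_k^2b_k^2\le(2B)^2$, which is where the $\epsilon$ enters. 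The caveat is that your tail estimate leans on the decay rate of Lemma \ref{Lemma_bDecay} holding uniformly over the ellipsoid with a single constant; the bound that membership in $\mathcal E(2B)$ delivers directly is only $|b_k|\le 2B\lambda_k^{-1}\asymp k^{-\lambda}$, under which $\sum_{k>M}k^{-\lambda}$ diverges for $\lambda\le1$ and your $\ell_1$ tail argument would fail. (The paper's proof of Lemma \ref{Lemma_supb} invokes the same uniform reading of Lemma \ref{Lemma_bDecay}, so you are no worse off than the paper on the $a$-side; but the paper's $\ell_2$-weighted treatment of the $b$-tail is the safer one, and it is worth adopting it, accepting the $\epsilon$, if you do not want to rely on that uniform pointwise decay.)
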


\begin{proof}By the triangle inequality, (\ref{EQ_equicontinuityMomentEqNuisance})
is bounded by
\[
\mathbb{E}\sup_{a,b\in\mathcal{E}\left(2B\right),\left|b\right|_{2}\leq\delta}\sum_{l=1}^{\infty}\left|a_{l}\right|\sum_{k=1}^{\infty}\left|b_{k}\right|\left|\frac{1}{\sqrt{n}}\sum_{t=1}^{n}\left(1-\mathbb{E}\right)Y_{t-k}Y_{t-l}\right|.
\]
By Lemma \ref{Lemma_supb}, there is a $\rho>1$ such that the above
is bounded by a constant multiple of 
\begin{eqnarray*}
 &  & \sum_{l=1}^{\infty}l^{-\rho}\mathbb{E}\sup_{b\in\mathcal{E}\left(2B\right),\left|b\right|_{2}\leq\delta}\sum_{k=1}^{\infty}\left|b_{k}\right|\left|\frac{1}{\sqrt{n}}\sum_{t=1}^{n}\left(1-\mathbb{E}\right)Y_{t-k}Y_{t-l}\right|\\
 & \lesssim & \sup_{l>0}\mathbb{E}\sup_{b\in\mathcal{E}\left(2B\right),\left|b\right|_{2}\leq\delta}\sum_{k=1}^{\infty}\left|b_{k}\right|\left|\frac{1}{\sqrt{n}}\sum_{t=1}^{n}\left(1-\mathbb{E}\right)Y_{t-k}Y_{t-l}\right|
\end{eqnarray*}
by summability of $l^{-\rho}$. For any positive $V$, the above display
can be written as 
\[
\sup_{l>0}\mathbb{E}\sup_{b\in\mathcal{E}\left(2B\right),\left|b\right|_{2}\leq\delta}\left(\sum_{k\leq V}+\sum_{k>V}\right)\left|b_{k}\right|\left|\frac{1}{\sqrt{n}}\sum_{t=1}^{n}\left(1-\mathbb{E}\right)Y_{t-k}Y_{t-l}\right|.
\]
 We shall bound the two sums separately. By the Cauchy-Schwarz inequality,
the first sum is bounded by 
\begin{equation}
\sqrt{\sup_{l>0}\sup_{\left|b\right|_{2}\leq\delta}\sum_{k\leq V}b_{k}^{2}\sum_{k\leq V}\mathbb{E}\left|\frac{1}{\sqrt{n}}\sum_{t=1}^{n}\left(1-\mathbb{E}\right)Y_{t-k}Y_{t-l}\right|^{2}}\lesssim\delta\sqrt{V},\label{EQ_firstSumSEBound-1}
\end{equation}
where the inequality uses Lemma \ref{Lemma_sampleCovConvergenceRate}
and $\left|b\right|_{2}\leq\delta$. Having set $V$ to such finite
value, by the Cauchy-Schwarz inequality, the second sum is bounded
by 
\begin{eqnarray*}
 &  & \sqrt{\left(\sup_{b\in\mathcal{E}\left(2B\right)}\sum_{k>V}b_{k}^{2}k^{\left(1+\epsilon\right)}\right)\left(\sup_{l>0}\sum_{k>V}k^{-\left(1+\epsilon\right)}\mathbb{E}\left|\frac{1}{\sqrt{n}}\sum_{t=1}^{n}\left(1-\mathbb{E}\right)Y_{t-k}Y_{t-l}\right|^{2}\right)}\\
 & \lesssim & \sqrt{V^{\left(1+\epsilon\right)}\lambda_{V}^{-2}\left(\sup_{b\in\mathcal{E}\left(2B\right)}\sum_{k>V}b_{k}^{2}\lambda_{k}^{2}\right)}
\end{eqnarray*}
for any $\epsilon\in\left(0,2\lambda-1\right)$, using again Lemma
\ref{Lemma_sampleCovConvergenceRate}, and the fact that $k^{-\left(1+\epsilon\right)}$
is summable and $k^{\left(1+\epsilon\right)}\lambda_{k}^{-2}$ is
decreasing. The r.h.s. is then bounded by a constant multiple of $V^{\left(1+\epsilon-2\lambda\right)/2}$.
Equating $\delta\sqrt{V}$ with $V^{\left(1+\epsilon-2\lambda\right)/2}$
we choose $V=\delta^{2/\left(2\lambda-\epsilon\right)}$, implying
that $\delta\sqrt{V}+V^{\left(1+\epsilon-2\lambda\right)/2}\lesssim\delta^{\frac{2\lambda-\epsilon-1}{2\lambda-\epsilon}}$
and the lemma is proved. \end{proof}

\begin{lemma}\label{Lemma_equicontinuityEpsilonX}Under Condition
\ref{Condition_ellipsoid}, for any $\epsilon>0$,
\[
\mathbb{E}\sup_{b\in\mathcal{E}\left(2B\right),\left|b\right|_{2}\leq\delta}\left|\frac{1}{\sqrt{n}}\sum_{t=1}^{n}\varepsilon_{t}X_{t}\left(b\right)\right|\lesssim\delta^{\frac{2\lambda-\epsilon-1}{2\lambda-\epsilon}}
\]

\end{lemma}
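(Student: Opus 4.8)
The plan is to follow the proof of Lemma~\ref{Lemma_equicontinuityXX} almost verbatim, with the innovation $\varepsilon_t$ playing the role of $X_t(a)$; this actually simplifies matters, since the extra outer summation $\sum_l l^{-\rho}$ produced there via Lemma~\ref{Lemma_supb} no longer appears. Writing $W_{n,k}:=\frac{1}{\sqrt n}\sum_{t=1}^n\varepsilon_tY_{t-k}$ and expanding $X_t(b)=\sum_{k\ge1}b_kY_{t-k}$, the quantity inside the supremum equals $\bigl|\sum_{k\ge1}b_kW_{n,k}\bigr|$, so by the triangle inequality it suffices to bound $\mathbb{E}\sup_{b\in\mathcal{E}(2B),|b|_2\le\delta}\sum_{k\ge1}|b_k|\,|W_{n,k}|$.

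First I would record the analogue of Lemma~\ref{Lemma_sampleCovConvergenceRate}: $\sup_{n,k>0}\mathbb{E}|W_{n,k}|^2<\infty$. Here no centering is needed and the bound is in fact an equality. From the MA$(\infty)$ representation in Lemma~\ref{Lemma_ergodicityModel}, $Y_{t-k}$ is a function of $(\varepsilon_u)_{u\le t-k}$ only, so $(\varepsilon_tY_{t-k})_t$ is a martingale difference sequence with respect to $\mathcal{F}_t:=\sigma(\varepsilon_u:u\le t)$; hence $\mathbb{E}[\varepsilon_tY_{t-k}\varepsilon_sY_{s-k}]=0$ for $s\ne t$, and the diagonal terms give $\mathbb{E}|W_{n,k}|^2=\frac1n\sum_{t=1}^n\mathbb{E}[\varepsilon_t^2]\,\mathbb{E}[Y_{t-k}^2]=\sigma^2\,\mathbb{E}Y_0^2<\infty$, uniformly in $n$ and $k$, by stationarity and Lemma~\ref{Lemma_ergodicityModel}.

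Next, exactly as in Lemma~\ref{Lemma_equicontinuityXX}, split the sum at an arbitrary $V>0$. For $k\le V$, Cauchy--Schwarz together with $|b|_2\le\delta$ gives $\sup_b\sum_{k\le V}|b_k||W_{n,k}|\le\delta\,(\sum_{k\le V}W_{n,k}^2)^{1/2}$, and taking expectations, applying Jensen, and using the uniform second-moment bound yields $\lesssim\delta\sqrt V$. For $k>V$, insert the weights $k^{\pm(1+\epsilon)/2}$ with $\epsilon\in(0,2\lambda-1)$: by Cauchy--Schwarz the tail is at most $\bigl(\sup_{b\in\mathcal{E}(2B)}\sum_{k>V}b_k^2k^{1+\epsilon}\bigr)^{1/2}\bigl(\sum_{k>V}k^{-(1+\epsilon)}W_{n,k}^2\bigr)^{1/2}$; since $k^{1+\epsilon}\lambda_k^{-2}\asymp k^{1+\epsilon-2\lambda}$ is decreasing and $\sum_{k>V}b_k^2\lambda_k^2\le(2B)^2$, the first factor is $\lesssim\sqrt{V^{1+\epsilon}\lambda_V^{-2}}$, while the expectation of the second factor is $\lesssim(\sum_{k>V}k^{-(1+\epsilon)})^{1/2}=O(1)$ by the uniform second-moment bound; hence the tail is $\lesssim V^{(1+\epsilon-2\lambda)/2}$. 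Balancing $\delta\sqrt V$ against $V^{(1+\epsilon-2\lambda)/2}$ with the same choice of $V$ as in Lemma~\ref{Lemma_equicontinuityXX} gives the claimed bound $\delta^{(2\lambda-\epsilon-1)/(2\lambda-\epsilon)}$.

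The only genuinely new ingredient is the moment bound for $W_{n,k}$, and it is easy precisely because $(\varepsilon_tY_{t-k})_t$ is a martingale difference sequence, so the variance of the normalized sum is uniformly bounded with no mixing or cumulant computation at all. Everything after that is the same bookkeeping as in the preceding lemma — split, Cauchy--Schwarz, Jensen, optimize in $V$ — so there is no real obstacle; one only has to keep in mind that the split point $V$ depends on both $\delta$ and $\epsilon$, and that $\mathcal{E}_K(B)\subseteq\mathcal{E}(2B)$, so the estimate applies to the objects of interest.
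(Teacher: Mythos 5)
Your proposal is correct and follows essentially the same route as the paper: the paper likewise reduces to $\sum_k |b_k|\,|n^{-1/2}\sum_t\varepsilon_tY_{t-k}|$ by the triangle inequality, records the uniform second-moment bound $\sup_k\mathbb{E}|n^{-1/2}\sum_t\varepsilon_tY_{t-k}|^2\le\sigma^2\gamma(0)$ (which you justify, correctly, via the martingale-difference structure), and then repeats the split-at-$V$/Cauchy--Schwarz/balancing argument of Lemma~\ref{Lemma_equicontinuityXX} verbatim. Your write-up simply makes explicit the steps the paper leaves to the reader.
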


\begin{proof}By linearity and the triangle inequality,

\begin{eqnarray*}
 &  & \mathbb{E}\sup_{b\in\mathcal{E}\left(2B\right),\left|b\right|_{2}\leq\delta}\left|\frac{1}{\sqrt{n}}\sum_{t=1}^{n}\varepsilon_{t}X_{t}\left(b\right)\right|\\
 & \leq & \mathbb{E}\sup_{b\in\mathcal{E}\left(2B\right),\left|b\right|_{2}\leq\delta}\sum_{k=1}^{\infty}\left|b_{k}\right|\left|\frac{1}{\sqrt{n}}\sum_{t=1}^{n}\varepsilon_{t}Y_{t-k}\right|.
\end{eqnarray*}
Note that 
\[
\sup_{k>0}\mathbb{E}\left|\frac{1}{\sqrt{n}}\sum_{t=1}^{n}\varepsilon_{t}Y_{t-k}\right|^{2}\leq\sigma^{2}\gamma\left(0\right).
\]
Hence, we can proceed exactly as in the proof of Lemma \ref{Lemma_equicontinuityXX}
to deduce the result.\end{proof}

The first part of Point 1 in the theorem will be proved in Lemma \ref{Lemma_lagrangeMultiplier}
(Section \ref{Section_ConsistencyRKHSNorm}). Hence, here we shall
only derive the convergence rate. 

Define the empirical loss function
\[
L_{n}\left(b\right):=\frac{1}{n}\sum_{t=1}^{n}\left(Y_{t}-\sum_{k=1}^{\infty}b_{k}Y_{t-k}\right)^{2}
\]
where $b\in\mathcal{E}$. When $b\in\mathcal{E}_{K}$ the sum inside
the parenthesis only runs from $1$ to $K$. The population loss is
\[
L\left(b\right):=\mathbb{E}X_{1}^{2}\left(\varphi-b\right).
\]
Define $\beta=\beta_{K}\in\mathbb{R}^{\infty}$ such that its first
$K$ entries are as in $\varphi$ and the remaining are all zero.
The consistency proof is standard (van der Vaart and Wellner, 2000,
Theorem 3.2.5) once we show the following: 
\begin{equation}
L\left(b\right)-L\left(\beta\right)\gtrsim\left|b-\beta\right|_{2}^{2},\label{EQ_lossNormRelation}
\end{equation}
\begin{equation}
\mathbb{E}\sup_{b\in\mathcal{E}_{K}\left(B\right):\left|b-\beta\right|_{2}\leq\delta}\left|\left[L_{n}\left(b\right)-L\left(b\right)\right]-\left[L_{n}\left(\beta\right)-L\left(\beta\right)\right]\right|\lesssim\frac{\delta^{\alpha}}{\sqrt{n}},\label{EQ_stochasticEquicontinuity}
\end{equation}
for some $\alpha\in\left(0,2\right)$. Then, for any sequence $r_{n}\rightarrow\infty$
satisfying $r_{n}^{1-2\alpha}\lesssim\sqrt{n}$, $L_{n}\left(b_{n}\right)\leq L_{n}\left(\beta\right)+O_{p}\left(r_{n}^{-2}\right)$
and $\left|\varphi-\beta\right|_{2}\lesssim r_{n}^{-1}$, we have
that $\left|b_{n}-\varphi\right|_{2}=O_{p}\left(r_{n}^{-1}\right)$. 

At first we verify (\ref{EQ_lossNormRelation}). Note that 
\[
L\left(b\right)-L\left(\beta\right)=\sum_{k,l=1}^{\infty}\left(b_{k}-\beta_{k}\right)\left(b_{l}-\beta_{l}\right)\gamma\left(k-l\right),
\]
where $\gamma\left(k\right)$ is the autocovariance function (ACF)
of the $Y_{t}$'s. The estimator is uniquely identified if the matrix,
say $\Gamma$, with $\left(k,l\right)$ entry equal to $\gamma\left(k-l\right)$,
is strictly positive definite with smallest eigenvalue $\theta_{min}>0$
(see remarks after Lemma 2.2. in Kreiss et al., 2011). This is the
case if the spectral density of $\left(Y_{t}\right)_{t\in\mathbb{Z}}$,
say $g\left(\omega\right)$, is bounded away from zero. The spectral
density of the AR model (\ref{EQ_trueModel}) is given by $g\left(\omega\right)=\left(2\pi\right)^{-1}\sigma^{2}/\varphi\left(\omega\right)$,
where $\varphi\left(\omega\right)=\left|\sum_{k=0}^{\infty}\varphi_{k}e^{-ik\omega}\right|^{2}$
with $\varphi_{0}:=1$. Noting that by Condition \ref{Condition_ellipsoid},
$\varphi\left(\omega\right)=\left|\sum_{k=0}^{\infty}\varphi_{k}e^{-ik\omega}\right|^{2}\leq\left(\sum_{k=0}^{\infty}\left|\varphi_{k}\right|\right)^{2}<\infty$,
deduce that the eigenvalues of $\Gamma$ are bounded away from zero.
Hence, 
\begin{equation}
L\left(b\right)-L\left(\beta\right)\ge\theta_{min}^{-1}\sum_{k=1}^{\infty}\left(b_{k}-\beta_{k}\right)^{2}=\left|b-\beta\right|_{2}^{2},\label{EQ_acfMinimumEigenvalueRelation}
\end{equation}
and (\ref{EQ_lossNormRelation}) holds.

Using the notation $Y_{t}=X_{t}\left(\varphi\right)+\varepsilon_{t}$,
the empirical loss is equal to 
\[
L_{n}\left(b\right)=\frac{1}{n}\sum_{t=1}^{n}\left[\varepsilon_{t}^{2}+X_{t}^{2}\left(\varphi-b\right)+2\varepsilon_{t}X_{t}\left(\varphi-b\right)\right].
\]
This implies that 
\begin{align*}
 & \left(L_{n}\left(b\right)-L\left(b\right)\right)-\left(L_{n}\left(\beta\right)-L\left(\beta\right)\right)\\
= & \frac{1}{n}\sum_{t=1}^{n}\left[2\varepsilon_{t}X_{t}\left(\beta-b\right)+\left(1-\mathbb{E}\right)\left(X_{t}^{2}\left(b-\varphi\right)-X_{t}^{2}\left(\beta-\varphi\right)\right)\right].
\end{align*}
To verify (\ref{EQ_stochasticEquicontinuity}), we need to bound the
above uniformly in $b\in\mathcal{E}\left(B\right)$ such that $\left|b-\beta\right|_{2}\leq\delta.$
To this end, apply Lemma \ref{Lemma_equicontinuityEpsilonX} to the
first term on the r.h.s. to find that the uniform bound is a constant
multiple of $n^{-1/2}\delta^{\frac{2\lambda-\epsilon-1}{2\lambda-\epsilon}}$
for any $\epsilon>0$. By basic algebraic manipulations, the second
term on the r.h.s. of the display is 
\begin{eqnarray*}
 &  & \left(1-\mathbb{E}\right)\left(X_{t}^{2}\left(b-\varphi\right)-X_{t}^{2}\left(\beta-\varphi\right)\right)\\
 & = & \frac{1}{n}\sum_{t=1}^{n}\left(1-\mathbb{E}\right)Y_{t-k}Y_{t-l}\left(b_{n,k}-\beta_{k}\right)\left(b_{n,l}-\varphi_{l}\right)\\
 &  & +\frac{1}{n}\sum_{t=1}^{n}\left(1-\mathbb{E}\right)Y_{t-k}Y_{t-l}\left(\beta_{k}-\varphi_{k}\right)\left(b_{n,l}-\beta_{l}\right).
\end{eqnarray*}
Note that both $\varphi-b$ and $\beta-\varphi$ are in $\mathcal{E}\left(2B\right)$.
We apply Lemma \ref{Lemma_equicontinuityXX} to deduce that each term
on the r.h.s. of the above display is uniformly bounded in $L_{1}$
by a constant multiple of $n^{-1/2}\delta^{\frac{2\lambda-\epsilon-1}{2\lambda-\epsilon}}$
for any $\epsilon>0$ when $\left|b-\beta\right|_{2}\leq\delta$.
Hence (\ref{EQ_stochasticEquicontinuity}) is verified with $\alpha=\frac{2\lambda-\epsilon-1}{2\lambda-\epsilon}$.
When we are only interested in a finite dimensional model, we can
take $\lambda\rightarrow\infty$ to deduce that $\alpha=1$, which
is the parametric case. 

To find $r_{n}$ note that 
\[
L_{n}\left(b_{n}\right)-L_{n}\left(\beta\right)\leq L_{n}\left(b_{n}\right)-\inf_{b\in\mathcal{E}_{K}\left(B\right)}L_{n}\left(b\right)=0.
\]
Also, $\left|\varphi-\beta\right|_{2}=\left(\sum_{k>K}\left|\varphi_{k}\right|^{2}\right)^{1/2}\lesssim K^{-\lambda}/\ln^{1+\epsilon}\left(K\right)$
for some $\epsilon>0$ using Lemma \ref{Lemma_bDecay} and bounding
the sum with an integral ad using the fact that $\ln^{1+\epsilon}\left(\cdot\right)$
is slowly varying at infinity. Hence we deduce that $r_{n}^{-1}\asymp\left(K^{-\lambda}/\ln^{1+\epsilon}\left(K\right)\right)+n^{-\frac{1}{2}\left(\frac{2\lambda-\epsilon}{2\lambda-\epsilon+1}\right)}$
as stated in Point 1 of the theorem.

\subsubsection{Consistency Under the RKHS Norm \label{Section_ConsistencyRKHSNorm}}

The proof depends on a few preliminary lemmas. Let $\varphi_{\tau}=\varphi_{K,\tau}\in\mathcal{E}_{K}$
be the penalized population estimator

\begin{equation}
\varphi_{\tau}=\arg\inf_{b\in\mathcal{E}_{K}}\mathbb{E}X_{1}^{2}\left(b-\varphi\right)+\tau\left|b\right|_{\mathcal{E}}^{2}.\label{EQ_populationPenalizedEstimator}
\end{equation}
The following can be deduced from Theorem 5.9 in Steinwart and Christmann
(2008, eq. 5.14). The proof is given, as the context might seem different
at first sight.

\begin{lemma}\label{Lemma_SteinwartChristmann}Suppose Condition
\ref{Condition_ellipsoid}. For arbitrary but fixed $\tau>0$, consider
$b_{n,\tau}$ and $\varphi_{\tau}$ in (\ref{EQ_penalizedObjFunc})
and (\ref{EQ_populationPenalizedEstimator}) with $K$ possibly diverging
to infinity. Then, 
\[
\sqrt{\sum_{k=1}^{K}\lambda_{k}^{2}\left(b_{n,\tau,k}-\varphi_{\tau,k}\right)^{2}}\leq\sqrt{\sum_{k=1}^{K}\frac{1}{\tau^{2}\lambda_{k}^{2}}\left(\frac{1}{n}\sum_{t=1}^{n}\left(1-\mathbb{E}\right)\left(Y_{t}-X_{t}\left(\varphi_{\tau}\right)\right)Y_{t-k}\right)^{2}},
\]
where $b_{n,\tau,k}$ is the $k^{th}$ entry in the $K$ dimensional
vector $b_{n,\tau}$, and similarly for $\varphi_{\tau,k}$. \end{lemma}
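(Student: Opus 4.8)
The plan is to exploit strong convexity of both the empirical and the population penalized objectives, which follows because the penalty $\tau|\cdot|_{\mathcal{E}}^{2}$ is $2\tau$-strongly convex with respect to $|\cdot|_{\mathcal{E}}$, while the quadratic least-square parts are convex. Write $R_{n}(b)=L_{n}(b)+\tau|b|_{\mathcal{E}}^{2}$ for the empirical objective and $R(b)=\mathbb{E}X_{1}^{2}(b-\varphi)+\tau|b|_{\mathcal{E}}^{2}$ for its population counterpart, so $b_{n,\tau}$ and $\varphi_{\tau}$ are the respective minimizers over $\mathcal{E}_{K}$. Strong convexity of $R_{n}$ plus the first-order optimality of $b_{n,\tau}$ give the standard inequality $\tau\,|b_{n,\tau}-\varphi_{\tau}|_{\mathcal{E}}^{2}\le R_{n}(\varphi_{\tau})-R_{n}(b_{n,\tau})$, and similarly optimality of $\varphi_{\tau}$ for $R$ gives $0\le R(b_{n,\tau})-R(\varphi_{\tau})$. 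Adding these two and cancelling the identical penalty and population-quadratic terms leaves a bound of $\tau\,|b_{n,\tau}-\varphi_{\tau}|_{\mathcal{E}}^{2}$ by the difference of centered empirical processes evaluated at $b_{n,\tau}$ versus $\varphi_{\tau}$, i.e. by $(1-\mathbb{E})\big[X_{1}^{2}(\varphi_{\tau}-\varphi)-X_{1}^{2}(b_{n,\tau}-\varphi)\big]$ averaged over the sample.

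Next I would linearize this centered difference. Since $X_{t}(\varphi_{\tau}-\varphi)^{2}-X_{t}(b_{n,\tau}-\varphi)^{2}$ is quadratic in $b_{n,\tau}-\varphi_{\tau}$, one can write it (using $Y_{t}-X_{t}(\varphi_{\tau})=\varepsilon_{t}+X_{t}(\varphi-\varphi_{\tau})$) as a linear term in $b_{n,\tau}-\varphi_{\tau}$ with coefficient built from $\frac1n\sum_t(1-\mathbb{E})(Y_t-X_t(\varphi_\tau))Y_{t-k}$, plus a remaining centered quadratic term in $b_{n,\tau}-\varphi_{\tau}$. Here is where I expect the main subtlety: one must control or absorb that residual quadratic term. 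The cleanest route is to follow Steinwart–Christmann's argument directly — their Theorem 5.9 handles exactly this by noting that the population-quadratic part of $R$ already supplies a matching concavity that dominates the empirical quadratic fluctuation (equivalently, the relevant operator-norm bound on the second-order remainder is controlled by $\tau$ through the $\lambda_k^{-2}$ weighting), so that after rearrangement only the linear term survives. Writing out the linear term explicitly, its coefficient on the $k$-th coordinate (in $\mathcal{E}$-inner-product form) is $\frac{2}{n}\sum_t(1-\mathbb{E})(Y_t-X_t(\varphi_\tau))Y_{t-k}$.

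Finally, to pass from $\tau\,|b_{n,\tau}-\varphi_{\tau}|_{\mathcal{E}}^{2}$ bounded by a linear functional of $b_{n,\tau}-\varphi_{\tau}$ to the stated display, I would apply Cauchy–Schwarz in $\mathcal{E}$: the linear functional $v\mapsto \frac{2}{n}\sum_k \big(\sum_t(1-\mathbb{E})(Y_t-X_t(\varphi_\tau))Y_{t-k}\big)v_k$ equals $\langle g, v\rangle_{\mathcal{E}}$ where $g_k = \frac{1}{\lambda_k^2}\cdot\frac{2}{n}\sum_t(1-\mathbb{E})(Y_t-X_t(\varphi_\tau))Y_{t-k}$, so it is at most $|g|_{\mathcal{E}}\,|b_{n,\tau}-\varphi_{\tau}|_{\mathcal{E}}$ with $|g|_{\mathcal{E}}^2=\sum_k\lambda_k^2 g_k^2=\sum_k \frac{1}{\lambda_k^2}\big(\frac{2}{n}\sum_t(1-\mathbb{E})(Y_t-X_t(\varphi_\tau))Y_{t-k}\big)^2$. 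Dividing through by $\tau|b_{n,\tau}-\varphi_{\tau}|_{\mathcal{E}}$ yields $|b_{n,\tau}-\varphi_{\tau}|_{\mathcal{E}}\le \frac{1}{\tau}|g|_{\mathcal{E}}$, which after absorbing the factor $2$ is exactly the claimed inequality (the factor of $2$ being harmless up to the constants implicit elsewhere, or it can be tracked precisely by keeping $\tau|v|_{\mathcal{E}}^2 = R_n(\varphi_\tau)-R_n(b_{n,\tau})+R(b_{n,\tau})-R(\varphi_\tau)$ with the sharp constant from the $\ell_2$ parallelogram identity). The main obstacle, as noted, is the second-order-remainder cancellation in the middle step; everything else is bookkeeping with the RKHS inner product and the first-order conditions already recorded in (\ref{EQ_btauRepresentation}).
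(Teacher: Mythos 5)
Your endgame is exactly the paper's: once you reach $\tau\left|b_{n,\tau}-\varphi_{\tau}\right|_{\mathcal{E}}^{2}\leq\left\langle g,b_{n,\tau}-\varphi_{\tau}\right\rangle _{\mathcal{E}}$ with $g_{k}=\lambda_{k}^{-2}\cdot\frac{2}{n}\sum_{t}\left(1-\mathbb{E}\right)\left(Y_{t}-X_{t}\left(\varphi_{\tau}\right)\right)Y_{t-k}$, Cauchy--Schwarz in the $\mathcal{E}$ inner product finishes the proof. The gap is in how you get there. Adding your two optimality inequalities gives, with $v:=b_{n,\tau}-\varphi_{\tau}$, the bound $\tau\left|v\right|_{\mathcal{E}}^{2}\leq\left[R_{n}-R\right]\left(\varphi_{\tau}\right)-\left[R_{n}-R\right]\left(b_{n,\tau}\right)$, and expanding the squared losses around $\varphi_{\tau}$ produces the desired centered linear term \emph{plus} the centered quadratic remainder $\mathbb{E}X_{1}^{2}\left(v\right)-\frac{1}{n}\sum_{t}X_{t}^{2}\left(v\right)$. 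This remainder has no sign (it is positive whenever the sample second moment undershoots the population one), so it cannot be dropped, and there is no ``matching concavity'' in $R$ that cancels it pathwise: the lemma is a deterministic, per-realization inequality, whereas any control of that fluctuation would have to be probabilistic (it is precisely the object handled by Lemma \ref{Lemma_equicontinuityXX} elsewhere in the paper) and would change the nature of the statement.

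The fix --- and what the paper (following Steinwart and Christmann, Theorem 5.9) actually does --- is to expand the \emph{empirical} objective around $\varphi_{\tau}$ rather than around $b_{n,\tau}$. Convexity of $L_{n}$ gives the first-order inequality $L_{n}\left(b_{n,\tau}\right)-L_{n}\left(\varphi_{\tau}\right)\geq-\frac{2}{n}\sum_{t}\left(Y_{t}-X_{t}\left(\varphi_{\tau}\right)\right)X_{t}\left(v\right)$, which discards the empirical quadratic $\frac{1}{n}\sum_{t}X_{t}^{2}\left(v\right)\geq0$ while it still has a definite sign, \emph{before} any centering; the penalty alone is expanded exactly, $\tau\left|b_{n,\tau}\right|_{\mathcal{E}}^{2}-\tau\left|\varphi_{\tau}\right|_{\mathcal{E}}^{2}=2\tau\left\langle v,\varphi_{\tau}\right\rangle _{\mathcal{E}}+\tau\left|v\right|_{\mathcal{E}}^{2}$, and supplies the strong-convexity term. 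Feeding these into $R_{n}\left(b_{n,\tau}\right)\leq R_{n}\left(\varphi_{\tau}\right)$ and then substituting the population first-order condition $\tau\lambda_{k}^{2}\varphi_{\tau,k}=\mathbb{E}\left(Y_{t}-X_{t}\left(\varphi_{\tau}\right)\right)Y_{t-k}$ turns the cross term $2\tau\left\langle v,\varphi_{\tau}\right\rangle _{\mathcal{E}}$ into exactly the population counterpart of the empirical gradient, so only the centered \emph{linear} process survives on the right-hand side and no second-order remainder ever appears. With that replacement your argument goes through verbatim, including the final Cauchy--Schwarz step (and the factor of $2$ you track is indeed present but immaterial for all uses of the lemma).
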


\begin{proof}By convexity of the square error loss, 
\[
\frac{1}{n}\sum_{t=1}^{n}\left(Y_{t}-X_{t}\left(\varphi_{\tau}\right)\right)\left(X_{t}\left(b_{n,\tau}\right)-X_{t}\left(\varphi_{\tau}\right)\right)\leq\frac{1}{n}\sum_{t=1}^{n}\left(Y_{t}-X_{t}\left(b_{n,\tau}\right)\right)^{2}-\frac{1}{n}\sum_{t=1}^{n}\left(Y_{t}-X_{t}\left(\varphi_{\tau}\right)\right)^{2}.
\]
Note the following algebraic equality,
\[
2\tau\sum_{k=1}^{\infty}\lambda_{k}^{2}\left(b_{n,\tau,k}-\varphi_{\tau,k}\right)\varphi_{\tau,k}+\tau\sum_{k=1}^{\infty}\lambda_{k}^{2}\left(b_{n,\tau,k}-\varphi_{\tau,k}\right)^{2}=\tau\sum_{k=1}^{\infty}\lambda_{k}^{2}b_{n,\tau,k}^{2}-\tau\sum_{k=1}^{\infty}\lambda_{k}^{2}\varphi_{\tau,k}^{2}.
\]
The above two displays imply 
\begin{eqnarray*}
 &  & \frac{1}{n}\sum_{t=1}^{n}\left(Y_{t}-X_{t}\left(\varphi_{\tau}\right)\right)\left(X_{t}\left(b_{n,\tau}\right)-X_{t}\left(\varphi_{\tau}\right)\right)\\
 &  & +2\tau\sum_{k=1}^{\infty}\lambda_{k}^{2}\left(b_{n,\tau,k}-\varphi_{\tau,k}\right)\varphi_{\tau,k}+\tau\sum_{k=1}^{\infty}\lambda_{k}^{2}\left(b_{n,\tau,k}-\varphi_{\tau,k}\right)^{2}\\
 & \leq & \frac{1}{n}\sum_{t=1}^{n}\left(Y_{t}-X_{t}\left(b_{n,\tau}\right)\right)^{2}+\tau\sum_{k=1}^{\infty}\lambda_{k}^{2}b_{n,\tau,k}^{2}-\frac{1}{n}\sum_{t=1}^{n}\left(Y_{t}-X_{t}\left(\varphi_{\tau}\right)\right)^{2}-\tau\sum_{k=1}^{\infty}\lambda_{k}^{2}\varphi_{\tau,k}^{2}\leq0
\end{eqnarray*}
where the most r.h.s. follows because $b_{n,\tau}$ minimizes the
empirical penalized risk. The first order conditions for $\varphi_{\tau}$
read 
\begin{equation}
\varphi_{\tau,k}=-\frac{1}{2\tau\lambda_{k}^{2}}\mathbb{E}\left(Y_{t}-X_{t}\left(\varphi_{\tau}\right)\right)Y_{t-k}\label{EQ_populationSolutionMomentEq}
\end{equation}
for $k\geq1$. Substituting this in the previous display,
\begin{eqnarray*}
 &  & \frac{1}{n}\sum_{t=1}^{n}\left(Y_{t}-X_{t}\left(\varphi_{\tau}\right)\right)\left(X_{t}\left(b_{n,\tau}\right)-X_{t}\left(\varphi_{\tau}\right)\right)\\
 &  & -\mathbb{E}\left(Y_{t}-X_{t}\left(\varphi_{\tau}\right)\right)\sum_{k=1}^{K}\left(b_{n,\tau,k}-\varphi_{\tau,k}\right)Y_{t-k}+\tau\sum_{k=1}^{K}\lambda_{k}^{2}\left(b_{n,\tau,k}-\varphi_{\tau,k}\right)^{2}\leq0.
\end{eqnarray*}
Rearranging and using the definition of $X_{t}\left(b_{n,\tau}-\varphi_{\tau}\right)$,
deduce that
\begin{eqnarray*}
 &  & \tau\sum_{k=1}^{K}\lambda_{k}^{2}\left(b_{n,\tau,k}-\varphi_{\tau,k}\right)^{2}\\
 & \leq & \frac{1}{n}\sum_{t=1}^{n}\left(\mathbb{E}-1\right)\left(Y_{t}-X_{t}\left(\varphi_{\tau}\right)\right)\sum_{k=1}^{K}\left(b_{n,\tau,k}-\varphi_{\tau,k}\right)Y_{t-k}\\
 & \leq & \sqrt{\sum_{k=1}^{K}\frac{1}{\lambda_{k}^{2}}\left(\frac{1}{n}\sum_{t=1}^{n}\left(\mathbb{E}-1\right)\left(Y_{t}-X_{t}\left(\varphi_{\tau}\right)\right)Y_{t-k}\right)^{2}}\sqrt{\sum_{k=1}^{K}\lambda_{k}^{2}\left(b_{n,\tau,k}-\varphi_{\tau,k}\right)^{2}},
\end{eqnarray*}
using the Cauchy-Schwarz inequality in the last step. This implies
the result of the lemma after simple rearrangement. \end{proof}

The next lemma establishes the relation between the constrained and
penalized estimator and states a bound for the distance between the
sample and population penalized estimator under the RKHS norm.

\begin{lemma}\label{Lemma_lagrangeMultiplier}Suppose that $\varphi\in\mathrm{int}\left(\mathcal{E}\left(B\right)\right)$.
Under Condition \ref{Condition_ellipsoid}, if $a\in\mathcal{E}_{K}\left(1\right)$,
and $b_{n,\tau}$ is as in (\ref{EQ_penalizedObjFunc}), there is
$\tau=\tau_{n}=O_{p}\left(n^{-1/2}\right)$ such that $\left|b_{n,\tau}\right|_{\mathcal{E}}<B$
and 
\[
\frac{1}{\sqrt{n}}\sum_{t=1}^{n}\left(Y_{t}-X_{t}\left(b_{n,\tau}\right)\right)X_{t}\left(a\right)=O_{p}\left(B\sqrt{\sum_{k=1}^{K}\lambda_{k}^{2}a_{k}^{2}}\right),
\]
where the above bound holds uniformly in $a\in\mathcal{E}_{K}\left(1\right)$.
In consequence, there is a $\tau=O_{p}\left(n^{-1/2}\right)$ such
that $b_{n,\tau}=b_{n}$.

Moreover, for any $\tau>0$,
\[
\sqrt{\sum_{k=1}^{K}\frac{1}{\tau^{2}\lambda_{k}^{2}}\left(\frac{1}{n}\sum_{t=1}^{n}\left(1-\mathbb{E}\right)\left(Y_{t}-X_{t}\left(\varphi_{\tau}\right)\right)Y_{t-k}\right)^{2}}=O_{p}\left(\tau^{-1}n^{-1/2}\right).
\]
\end{lemma}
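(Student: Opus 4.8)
The plan is to establish the three conclusions in the order in which they feed into one another: first the concluding ``moreover'' display, then (through Lemma~\ref{Lemma_SteinwartChristmann}) the bound $|b_{n,\tau}-\varphi_\tau|_{\mathcal{E}}=O_p(\tau^{-1}n^{-1/2})$, and finally the strict bound $|b_{n,\tau}|_{\mathcal{E}}<B$, the displayed inequality for $a\in\mathcal{E}_K(1)$, and the identification $b_{n,\tau}=b_n$. Everything rests on one ingredient, a bound on $|\varphi-\varphi_\tau|_{\mathcal{E}}$ that is uniform in $\tau>0$ and $K$, together with $\limsup_K|\varphi_K|_{\mathcal{E}}\le|\varphi|_{\mathcal{E}}$; I isolate this at the end as the main obstacle.

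For the ``moreover'' display, write $r_t:=Y_t-X_t(\varphi_\tau)=\varepsilon_t+X_t(\varphi-\varphi_\tau)$, using $Y_t=X_t(\varphi)+\varepsilon_t$. Since $\varphi_\tau$ is deterministic, it suffices to prove $\mathbb{E}\sum_{k=1}^{K}\lambda_k^{-2}\left(\tfrac1n\sum_{t=1}^n(1-\mathbb{E})r_tY_{t-k}\right)^2\lesssim n^{-1}$ with a constant independent of $\tau$ and $K$; the claim then follows from Markov's inequality after dividing by $\tau^2$ and taking square roots. Using $\mathbb{E}\varepsilon_tY_{t-k}=0$, split $(1-\mathbb{E})r_tY_{t-k}=\varepsilon_tY_{t-k}+(1-\mathbb{E})X_t(\varphi-\varphi_\tau)Y_{t-k}$. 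For the innovation piece, independence of $\varepsilon_t$ from the past gives $\mathbb{E}\left(\tfrac1n\sum_t\varepsilon_tY_{t-k}\right)^2\le\sigma^2\gamma(0)/n$, and $\sum_k\lambda_k^{-2}<\infty$ because $\lambda>1/2$, so this piece contributes $O(n^{-1})$. For the second piece, expand $X_t(\varphi-\varphi_\tau)=\sum_l(\varphi-\varphi_\tau)_lY_{t-l}$, apply Cauchy--Schwarz in $l$ (multiplying and dividing each term by $\lambda_l$), and invoke Lemma~\ref{Lemma_sampleCovConvergenceRate}, to obtain
\[
\mathbb{E}\sum_{k=1}^{K}\lambda_k^{-2}\left(\frac1n\sum_{t=1}^n(1-\mathbb{E})X_t(\varphi-\varphi_\tau)Y_{t-k}\right)^2\le|\varphi-\varphi_\tau|_{\mathcal{E}}^2\left(\sum_k\lambda_k^{-2}\right)^2 O(n^{-1}),
\]
which is $O(n^{-1})$ once $|\varphi-\varphi_\tau|_{\mathcal{E}}$ is bounded uniformly.

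Granting $|b_{n,\tau}-\varphi_\tau|_{\mathcal{E}}=O_p(\tau^{-1}n^{-1/2})$ (Lemma~\ref{Lemma_SteinwartChristmann} combined with the above), take $\tau=\tau_n=Cn^{-1/2}$ with $C$ a large constant, so $|b_{n,\tau}-\varphi_\tau|_{\mathcal{E}}=O_p(C^{-1})$. Since $\varphi\in\mathrm{int}(\mathcal{E}(B))$, $|\varphi|_{\mathcal{E}}<B$, and because (see below) $|\varphi_\tau|_{\mathcal{E}}\le|\varphi_K|_{\mathcal{E}}\to|\varphi|_{\mathcal{E}}$, for $K$ large and $C$ large we get $|b_{n,\tau}|_{\mathcal{E}}\le|\varphi_K|_{\mathcal{E}}+O_p(C^{-1})<B$ with probability tending to one, and $\tau=O_p(n^{-1/2})$. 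On that event $\sqrt{\sum_k\lambda_k^2b_{n,\tau,k}^2}=|b_{n,\tau}|_{\mathcal{E}}<B$, so~(\ref{EQ_firstOrderCondBound}) gives $|\tfrac1n\sum_t(Y_t-X_t(b_{n,\tau}))X_t(a)|\le 2\tau B\sqrt{\sum_k\lambda_k^2a_k^2}$; multiplying by $\sqrt n$ and using $\sqrt n\,\tau=C=O_p(1)$ yields the displayed bound, uniformly over $a\in\mathcal{E}_K(1)$. Finally $b_n=b_{n,\tau_{B,n}}$ for the Lagrange multiplier $\tau_{B,n}\ge0$ of~(\ref{EQ_ConstrainedObjFunc}); since $\tau\mapsto|b_{n,\tau}|_{\mathcal{E}}$ is non-increasing and $|b_{n,Cn^{-1/2}}|_{\mathcal{E}}<B$ eventually in probability, the in-sample constraint is slack at $Cn^{-1/2}$, forcing $\tau_{B,n}\le Cn^{-1/2}$ eventually in probability, i.e. $\tau_{B,n}=O_p(n^{-1/2})$.

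The main obstacle is the uniform control of $|\varphi-\varphi_\tau|_{\mathcal{E}}$, equivalently $\sup_K|\varphi_K|_{\mathcal{E}}<\infty$ together with $|\varphi_K|_{\mathcal{E}}\to|\varphi|_{\mathcal{E}}$; this is where the autoregressive structure really enters. Writing $\Gamma_K$ for the $K\times K$ autocovariance matrix and $\Lambda=\mathrm{diag}(\lambda_1,\dots,\lambda_K)$, the closed-form ridge solution is $\varphi_\tau=(\Gamma_K+\tau\Lambda^2)^{-1}\Gamma_K\varphi_K$, so $\Lambda\varphi_\tau=(I-\tau(\tilde\Gamma_K+\tau I)^{-1})\Lambda\varphi_K$ with $\tilde\Gamma_K:=\Lambda^{-1}\Gamma_K\Lambda^{-1}\succ0$; the bracketed operator has spectral norm strictly less than one, whence $|\varphi_\tau|_{\mathcal{E}}\le|\varphi_K|_{\mathcal{E}}$. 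For the approximation error, with $\beta_K$ the truncation of $\varphi$ one has $\varphi_K-\beta_K=\Gamma_K^{-1}d_K$ where $d_{K,k}=\sum_{l>K}\varphi_l\gamma(k-l)$; Lemma~\ref{Lemma_bDecay} gives $|\varphi_l|\lesssim l^{-(2\lambda+1)/2}/\ln^{1+\epsilon}(1+l)$, and absolute summability of $\gamma$ (Lemma~\ref{Lemma_ergodicityModel}) gives $|d_{K,k}|\lesssim K^{-(2\lambda+1)/2}/\ln^{1+\epsilon}(K)$ uniformly over $k\le K$, hence $|d_K|_2^2\lesssim K^{-2\lambda}/\ln^{2+2\epsilon}(K)$; since the eigenvalues of $\Gamma_K$ are bounded below uniformly in $K$ (the spectral density is bounded away from zero, as used for~(\ref{EQ_acfMinimumEigenvalueRelation})) and $\lambda_k$ is increasing with $\lambda_K^2\asymp K^{2\lambda}$,
\[
|\varphi_K-\beta_K|_{\mathcal{E}}^2\le\lambda_K^2|\Gamma_K^{-1}d_K|_2^2\lesssim\ln^{-(2+2\epsilon)}(K)\to 0.
\]
Combined with $|\varphi-\beta_K|_{\mathcal{E}}\to0$ (tail of the convergent series $\sum_l\varphi_l^2\lambda_l^2$), this yields $|\varphi_K|_{\mathcal{E}}\to|\varphi|_{\mathcal{E}}$ and, a fortiori, $\sup_{\tau>0,K}|\varphi-\varphi_\tau|_{\mathcal{E}}<\infty$. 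This is essentially the content of Point~3 of Theorem~\ref{Theorem_consistency}, whose argument does not use the present lemma; it is the only step requiring genuine care rather than routine bookkeeping.
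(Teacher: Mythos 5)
Your proof is correct and shares the paper's overall architecture (decompose via $\varphi_{\tau}$, split $Y_{t}-X_{t}\left(\varphi_{\tau}\right)=\varepsilon_{t}+X_{t}\left(\varphi-\varphi_{\tau}\right)$, bound the two pieces to get the $O_{p}\left(\tau^{-1}n^{-1/2}\right)$ rate, pick $\tau\asymp n^{-1/2}$, invoke (\ref{EQ_firstOrderCondBound}), and use monotonicity of the ridge path in $\tau$ to identify $b_{n,\tau}=b_{n}$ and bound $\tau_{B,n}$), but two sub-steps are executed differently. For the cross term, the paper bounds $\sum_{k}\lambda_{k}^{-2}\left[\frac{1}{n}\sum_{t}\left(1-\mathbb{E}\right)X_{t}\left(\varphi-\varphi_{\tau}\right)Y_{t-k}\right]^{2}$ by taking a supremum over $b\in\mathcal{E}\left(2B\right)$ and using the pointwise coefficient decay of Lemma \ref{Lemma_supb}, whereas you apply Cauchy--Schwarz against the $\left|\cdot\right|_{\mathcal{E}}$ norm, which costs you a uniform-in-$\left(\tau,K\right)$ bound on $\left|\varphi-\varphi_{\tau}\right|_{\mathcal{E}}$ but is otherwise equivalent (both routes reduce to Lemma \ref{Lemma_sampleCovConvergenceRate} and summability of $\lambda_{k}^{-2}$). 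Second, where the paper simply asserts $\left|\varphi_{\tau}\right|_{\mathcal{E}}\leq\left|\varphi\right|_{\mathcal{E}}$ as a property of the penalized population estimator, you prove the weaker but sufficient chain $\left|\varphi_{\tau}\right|_{\mathcal{E}}\leq\left|\varphi_{K}\right|_{\mathcal{E}}\rightarrow\left|\varphi\right|_{\mathcal{E}}$: the contraction via the whitened matrix $\tilde{\Gamma}_{K}=\Lambda^{-1}\Gamma_{K}\Lambda^{-1}$ is clean and correct, and your direct estimate of $\left|\varphi_{K}-\beta_{K}\right|_{\mathcal{E}}$ essentially re-derives Lemma \ref{Lemma_approximationBetaPhi} (which the paper proves separately and later). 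Your version is arguably more careful on this point, since the minimization defining $\varphi_{\tau}$ is over the subspace $\mathcal{E}_{K}$ and the target $\varphi$ need not lie in it, so the ridge solution is a priori contracted toward $\varphi_{K}$ rather than toward $\varphi$; the price is that your proof of the present lemma now depends on the approximation analysis, whereas the paper keeps the two separate.
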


\begin{proof}Suppose that $\tau>0$ as otherwise, by the first order
conditions, the r.h.s. in the first display in the statement of lemma
is exactly zero and there is nothing to prove. 

By the triangle inequality, 
\begin{equation}
\sqrt{\sum_{k=1}^{K}\lambda_{k}^{2}b_{n,\tau,k}^{2}}\leq\sqrt{\sum_{k=1}^{K}\lambda_{k}^{2}\varphi_{\tau,k}^{2}}+\sqrt{\sum_{k=1}^{K}\lambda_{k}^{2}\left(b_{n,\tau,k}-\varphi_{\tau,k}\right)^{2}}.\label{EQ_sum_b_decomposition}
\end{equation}
For $\tau\geq0$, $\sqrt{\sum_{k=1}^{K}\lambda_{k}^{2}\varphi_{\tau,k}^{2}}\leq\sqrt{\sum_{k=1}^{K}\lambda_{k}^{2}\varphi_{k}^{2}}$
, as the penalized population estimator must have norm no larger than
$\varphi$. By this remark and the fact that $\varphi\in\mathrm{int}\left(\mathcal{E}\left(B\right)\right)$,
there is an $\epsilon>0$ such that the first term on the r.h.s. is
$B-3\epsilon$. Lemma \ref{Lemma_SteinwartChristmann} gives 
\begin{eqnarray}
 &  & \sum_{k=1}^{K}\lambda_{k}^{2}\left(b_{n,\tau,k}-\varphi_{\tau,k}\right)^{2}\nonumber \\
 & \leq & \sum_{k=1}^{K}\frac{1}{\tau^{2}\lambda_{k}^{2}}\left[\frac{1}{n}\sum_{t=1}^{n}\left(Y_{t}-X_{t}\left(\varphi_{\tau}\right)\right)Y_{t-k}-\mathbb{E}\left(Y_{t}-X_{t}\left(\varphi_{\tau}\right)\right)Y_{t-k}\right]^{2}.\label{EQ_steinwarthChristmannBound}
\end{eqnarray}
Adding and subtracting $\left(1-\mathbb{E}\right)X_{t}\left(\varphi\right)Y_{t-k}$,
and then using the basic inequality $\left(x+y\right)^{2}\leq2x^{2}+2y^{2}$
for any real $x,y$, the r.h.s. is 
\begin{eqnarray*}
 &  & \sum_{k=1}^{K}\frac{1}{\tau^{2}\lambda_{k}^{2}}\left[\frac{1}{n}\sum_{t=1}^{n}\left(1-\mathbb{E}\right)\left(Y_{t}-X_{t}\left(\varphi\right)\right)Y_{t-k}+\frac{1}{n}\sum_{t=1}^{n}\left(1-\mathbb{E}\right)\left(X_{t}\left(\varphi\right)-X_{t}\left(\varphi_{\tau}\right)\right)Y_{t-k}\right]^{2}.\\
 & \leq & 2\sum_{k=1}^{K}\frac{1}{\tau^{2}\lambda_{k}^{2}}\left[\frac{1}{n}\sum_{t=1}^{n}\left(1-\mathbb{E}\right)\left(Y_{t}-X_{t}\left(\varphi\right)\right)Y_{t-k}\right]^{2}\\
 &  & +2\sum_{k=1}^{K}\frac{1}{\tau^{2}\lambda_{k}^{2}}\left[\frac{1}{n}\sum_{t=1}^{n}\left(1-\mathbb{E}\right)\left(X_{t}\left(\varphi\right)-X_{t}\left(\varphi_{\tau}\right)\right)Y_{t-k}\right]^{2}.
\end{eqnarray*}
Recalling that our goal is to bound the second term on the r.h.s.
of (\ref{EQ_sum_b_decomposition}), the above two displays imply that
\begin{eqnarray}
\sqrt{\sum_{k=1}^{K}\lambda_{k}^{2}\left(b_{n,\tau,k}-\varphi_{\tau,k}\right)^{2}} & \leq & \frac{1}{\tau}\sqrt{2\sum_{k=1}^{K}\frac{1}{\lambda_{k}^{2}}\left[\frac{1}{n}\sum_{t=1}^{n}\left(1-\mathbb{E}\right)\left(Y_{t}-X_{t}\left(\varphi\right)\right)Y_{t-k}\right]^{2}}\nonumber \\
 &  & +\frac{1}{\tau}\sqrt{2\sum_{k=1}^{K}\frac{1}{\lambda_{k}^{2}}\left[\frac{1}{n}\sum_{t=1}^{n}\left(1-\mathbb{E}\right)\left(X_{t}\left(\varphi\right)-X_{t}\left(\varphi_{\tau}\right)\right)Y_{t-k}\right]^{2}}\nonumber \\
 & =: & I+II.\label{EQ_b_phi_coefficientsSum}
\end{eqnarray}
To bound $I$ on the r.h.s. note that for $k>0$,
\begin{eqnarray*}
\mathbb{E}\left[\frac{1}{n}\sum_{t=1}^{n}\left(1-\mathbb{E}\right)\left(Y_{t}-X_{t}\left(\varphi\right)\right)Y_{t-k}\right]^{2} & = & \mathbb{E}\left[\frac{1}{n}\sum_{t=1}^{n}\varepsilon_{t}Y_{t-k}\right]^{2}\\
 & = & \frac{\sigma^{2}\gamma\left(0\right)}{n}
\end{eqnarray*}
(recall $\gamma\left(k\right)$ is the ACF) so that
\[
\sum_{k=1}^{K}\frac{1}{\lambda_{k}^{2}}\left[\frac{1}{n}\sum_{t=1}^{n}\left(1-\mathbb{E}\right)\left(Y_{t}-X_{t}\left(\varphi\right)\right)Y_{t-k}\right]^{2}=O_{p}\left(\frac{\sigma^{2}\gamma\left(0\right)}{n}\right)
\]
because the coefficients $\lambda_{k}^{-2}$ are summable. Hence,
it is possible to find a $\tau=O_{p}\left(n^{-1/2}\right)$ such that
$I\leq\epsilon$. To bound $II$, recall that $\varphi_{\tau},\varphi\in\mathcal{E}\left(B\right)$
for any $\tau\geq0$, and write 
\[
W_{k,l}:=\frac{1}{\sqrt{n}}\sum_{t=1}^{n}\left(1-\mathbb{E}\right)Y_{t-l}Y_{t-k}
\]
for ease of notation. Then, for $\rho=\left(2\lambda+1\right)/2>1$,
\begin{eqnarray}
III & := & \mathbb{E}\sum_{k=1}^{K}\frac{1}{\lambda_{k}^{2}}\left[\frac{1}{n}\sum_{t=1}^{n}\left(1-\mathbb{E}\right)\left(X_{t}\left(\varphi\right)-X_{t}\left(\varphi_{\tau}\right)\right)Y_{t-k}\right]^{2}\nonumber \\
 & \leq & \sum_{k=1}^{K}\frac{1}{\lambda_{k}^{2}}\mathbb{E}\sup_{b\in\mathcal{E}\left(2B\right)}\left[\frac{1}{n}\sum_{t=1}^{n}\left(1-\mathbb{E}\right)\sum_{l=1}^{\infty}b_{l}Y_{t-l}Y_{t-k}\right]^{2}\nonumber \\
 & \leq & \frac{1}{n}\sum_{k=1}^{K}\frac{1}{\lambda_{k}^{2}}\sum_{l,j=1}^{\infty}l^{-\rho}j^{-\rho}\mathbb{E}W_{k,l}W_{k,j}\nonumber \\
 & \lesssim & \frac{1}{n}\sup_{k,l,j}\mathbb{E}W_{k,l}W_{k,j}\leq\frac{1}{n}\sup_{k,l}\mathbb{E}W_{k,l}^{2}\label{EQ_populationMomentEqDiffPenUnpen}
\end{eqnarray}
using Lemma \ref{Lemma_supb} in the second inequality and summability
of the coefficient in the last step. By Lemma \ref{Lemma_sampleCovConvergenceRate},
$\mathbb{E}W_{k,l}^{2}\leq c$ for some finite absolute constant $c$.
Hence, deduce that $III=O_{p}\left(n^{-1}\right)$, which implies
that $II=O_{p}\left(\tau^{-1}n^{-1/2}\right)$. Hence, there is a
$\tau=O_{p}\left(n^{-1/2}\right)$ such that $II\leq\epsilon$. The
control of $I+II$ implies that (\ref{EQ_b_phi_coefficientsSum})
is not greater than $2\epsilon$ for suitable $\tau$. Hence, we have
shown that there is a $\tau=O_{p}\left(n^{-1/2}\right)$ such that
(\ref{EQ_sum_b_decomposition}) is not greater than $B-\epsilon$.
This bound for (\ref{EQ_sum_b_decomposition}) together with (\ref{EQ_firstOrderCondBound})
proves the first display in the lemma. To see that this also implies
that there is a $\tau=O_{p}\left(n^{-1/2}\right)$ such that $b_{n,\tau}=b_{n}$
note that $\left|b_{n,\tau}\right|_{\mathcal{E}}$ is non-deceasing
as $\tau\rightarrow0$. Hence, $b_{n,\tau}=b_{n}$ for the smallest
$\tau$ such that $\left|b_{n,\tau}\right|_{\mathcal{E}}\leq B$ 

The last statement in the lemma follows from (\ref{EQ_steinwarthChristmannBound})
and the just derived bound for (\ref{EQ_b_phi_coefficientsSum}).\end{proof}

We now estimate the approximation error. 

\begin{lemma}\label{Lemma_approximation}For any $K\rightarrow\infty$,
we have that $\left|\varphi_{K}-\varphi_{\tau}\right|_{\mathcal{E}}\rightarrow0$
as $\tau\rightarrow0$ where $\varphi_{K}$ is as in (\ref{EQ_populationKEstimator}).
Moreover, if $\tau=O_{p}\left(K^{-2\lambda}\right)$, then $\left|\varphi_{K}-\varphi_{\tau}\right|_{\mathcal{E}}=O_{p}\left(\tau K^{2\lambda}\right)$.
\end{lemma}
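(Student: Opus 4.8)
The plan is to compare the two population minimization problems defining $\varphi_K$ in (\ref{EQ_populationKEstimator}) and $\varphi_\tau$ in (\ref{EQ_populationPenalizedEstimator}). Both minimize $\mathbb{E}X_1^2(b-\varphi)$ over $b\in\mathcal{E}_K$, but $\varphi_\tau$ carries the extra penalty $\tau|b|_{\mathcal{E}}^2$. First I would write down the two first-order (variational) conditions. For $\varphi_K$, the unconstrained minimizer over $\mathcal{E}_K$ satisfies $\mathbb{E}(Y_t-X_t(\varphi_K))Y_{t-k}=0$ for $1\le k\le K$ (this is just the normal equations / projection characterization, and exists since $\Gamma$ has smallest eigenvalue $\theta_{\min}>0$, as established around (\ref{EQ_acfMinimumEigenvalueRelation})). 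For $\varphi_\tau$ the first-order condition is (\ref{EQ_populationSolutionMomentEq}): $\varphi_{\tau,k}=-\tfrac{1}{2\tau\lambda_k^2}\mathbb{E}(Y_t-X_t(\varphi_\tau))Y_{t-k}$.

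Next, mimicking the convexity argument of Lemma \ref{Lemma_SteinwartChristmann} but now at the population level, I would exploit that $\varphi_K$ minimizes the unpenalized population risk: using convexity of the squared-error loss, $\mathbb{E}(Y_t-X_t(\varphi_K))(X_t(\varphi_\tau)-X_t(\varphi_K))\ge 0$. Combined with the analogous algebraic identity $2\tau\sum_k\lambda_k^2(\varphi_{\tau,k}-\varphi_{K,k})\varphi_{K,k}+\tau\sum_k\lambda_k^2(\varphi_{\tau,k}-\varphi_{K,k})^2=\tau|\varphi_\tau|_{\mathcal{E}}^2-\tau|\varphi_K|_{\mathcal{E}}^2$ and the fact that $\varphi_\tau$ minimizes the penalized population risk (so $\mathbb{E}X_1^2(\varphi_\tau-\varphi)+\tau|\varphi_\tau|_{\mathcal{E}}^2\le\mathbb{E}X_1^2(\varphi_K-\varphi)+\tau|\varphi_K|_{\mathcal{E}}^2$), I expect to arrive at an inequality of the form
\[
\mathbb{E}X_1^2(\varphi_\tau-\varphi_K)+2\tau\langle \varphi_\tau-\varphi_K,\varphi_K\rangle_{\mathcal{E}}+\tau|\varphi_\tau-\varphi_K|_{\mathcal{E}}^2\le 0,
\]
using the orthogonality $\mathbb{E}(Y_t-X_t(\varphi_K))Y_{t-k}=0$ to kill the cross term involving $\varphi_\tau-\varphi_K$. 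Since $\mathbb{E}X_1^2(\varphi_\tau-\varphi_K)\ge 0$, this gives $\tau|\varphi_\tau-\varphi_K|_{\mathcal{E}}^2\le -2\tau\langle\varphi_\tau-\varphi_K,\varphi_K\rangle_{\mathcal{E}}\le 2\tau|\varphi_\tau-\varphi_K|_{\mathcal{E}}|\varphi_K|_{\mathcal{E}}$ by Cauchy–Schwarz, hence $|\varphi_\tau-\varphi_K|_{\mathcal{E}}\le 2|\varphi_K|_{\mathcal{E}}\le 2|\varphi|_{\mathcal{E}}<\infty$ (using $\varphi\in\mathcal{E}$). This bounds the distance but does not yet show it tends to zero.

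For convergence to zero, I would instead keep the full inequality and observe that $|\varphi_\tau|_{\mathcal{E}}\le|\varphi_K|_{\mathcal{E}}$ (the penalized minimizer cannot have larger RKHS norm than the unpenalized one — compare objective values at $\varphi_\tau$ and $\varphi_K$), so $\varphi_\tau$ stays in the fixed ball $\mathcal{E}(|\varphi|_{\mathcal{E}})$, which is compact in $\ell_2$. As $\tau\to 0$, any $\ell_2$-subsequential limit $\varphi_*$ of $\varphi_\tau$ must (by lower semicontinuity of $\mathbb{E}X_1^2(\cdot-\varphi)$ under $\ell_2$ convergence, and since the penalty term vanishes) satisfy $\mathbb{E}X_1^2(\varphi_*-\varphi)\le\mathbb{E}X_1^2(\varphi_K-\varphi)$ with $\varphi_*\in\mathcal{E}_K$, so $\varphi_*=\varphi_K$ by uniqueness of the unpenalized minimizer (strict convexity from $\theta_{\min}>0$); moreover $|\varphi_*|_{\mathcal{E}}\le\liminf|\varphi_\tau|_{\mathcal{E}}\le|\varphi_K|_{\mathcal{E}}$ together with $\varphi_*=\varphi_K$ forces $|\varphi_\tau|_{\mathcal{E}}\to|\varphi_K|_{\mathcal{E}}$, upgrading weak-type convergence to $|\varphi_\tau-\varphi_K|_{\mathcal{E}}\to 0$. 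For the quantitative rate under $\tau=O_p(K^{-2\lambda})$, I return to the displayed inequality: dropping $\mathbb{E}X_1^2(\varphi_\tau-\varphi_K)\ge 0$ gives $|\varphi_\tau-\varphi_K|_{\mathcal{E}}^2\le 2|\varphi_\tau-\varphi_K|_{\mathcal{E}}|\varphi_K|_{\mathcal{E}}$; to get the stated $O_p(\tau K^{2\lambda})$ I would sharpen the cross-term bound using $\mathbb{E}X_1^2(\varphi_\tau-\varphi_K)\ge\theta_{\min}^{-1}|\varphi_\tau-\varphi_K|_2^2$ from (\ref{EQ_acfMinimumEigenvalueRelation}) and the norm-comparison $|\varphi_\tau-\varphi_K|_{\mathcal{E}}^2\le \lambda_K^2|\varphi_\tau-\varphi_K|_2^2\asymp K^{2\lambda}|\varphi_\tau-\varphi_K|_2^2$ (valid because $\varphi_\tau-\varphi_K$ is supported on coordinates $k\le K$), which converts the $\ell_2$-bound $|\varphi_\tau-\varphi_K|_2=O(\tau)$ coming from $\theta_{\min}^{-1}|\varphi_\tau-\varphi_K|_2^2\le -2\tau\langle\varphi_\tau-\varphi_K,\varphi_K\rangle_{\mathcal{E}}\lesssim\tau|\varphi_\tau-\varphi_K|_2\,|\varphi_K|_{\mathcal{E}}$ into $|\varphi_\tau-\varphi_K|_{\mathcal{E}}=O(\tau K^\lambda)$ — and a slightly more careful bookkeeping of the cross term (bounding $\langle\varphi_\tau-\varphi_K,\varphi_K\rangle_{\mathcal{E}}$ directly by $|\varphi_\tau-\varphi_K|_{\mathcal{E}}|\varphi_K|_{\mathcal{E}}$ and then using $|\varphi_\tau-\varphi_K|_{\mathcal{E}}\le K^\lambda|\varphi_\tau-\varphi_K|_2$ only on one factor) yields the claimed $O_p(\tau K^{2\lambda})$. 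The main obstacle I anticipate is the quantitative step: getting the exponent on $K$ exactly right requires being careful about where one uses the $\ell_2$–$\mathcal{E}$ norm comparison ($|\cdot|_{\mathcal{E}}\le\lambda_K|\cdot|_2$ on the support) versus the coercivity bound $\mathbb{E}X_1^2(\cdot)\gtrsim|\cdot|_2^2$, and about whether $\tau=O_p(K^{-2\lambda})$ is actually needed (it seems to enter only to guarantee $\tau K^{2\lambda}$ is itself $O_p(1)$ so the bound is informative, i.e. the rate statement is consistent with the earlier claim that $\varphi_\tau$ stays in a fixed ball).
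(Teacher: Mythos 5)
Your proposal is correct in substance but takes a genuinely different route from the paper. For the qualitative claim the paper simply cites Theorem 5.17 of Steinwart and Christmann (2008), whereas you reprove it via compactness of the ball of radius $\left|\varphi_{K}\right|_{\mathcal{E}}$ together with the Radon--Riesz upgrade (weak convergence plus convergence of norms implies strong convergence); that is essentially the content of the cited theorem, so nothing is lost, and your intermediate claim $\left|\varphi_{\tau}\right|_{\mathcal{E}}\leq\left|\varphi_{K}\right|_{\mathcal{E}}$ does follow from comparing the penalized objective at $\varphi_{\tau}$ and $\varphi_{K}$ since $\varphi_{K}$ minimizes the unpenalized risk. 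For the rate, the paper computes the closed forms $\tilde{\varphi}_{K}=\Gamma^{-1}\Gamma_{1}$ and $\tilde{\varphi}_{\tau}=\left(\tau\Lambda^{2}+\Gamma\right)^{-1}\Gamma_{1}$, applies the Woodbury identity, and bounds the operator norm of $D\Gamma^{-1}D\left(I+D\Gamma^{-1}D\right)^{-1}$ by $\theta_{\min}^{-1}\tau\lambda_{K}^{2}$. You instead derive the population variational inequality
\[
\mathbb{E}X_{1}^{2}\left(\varphi_{\tau}-\varphi_{K}\right)+2\tau\left\langle \varphi_{\tau}-\varphi_{K},\varphi_{K}\right\rangle _{\mathcal{E}}+\tau\left|\varphi_{\tau}-\varphi_{K}\right|_{\mathcal{E}}^{2}\leq0,
\]
which is indeed valid: the cross term in the quadratic expansion of $\mathbb{E}X_{1}^{2}\left(\varphi_{\tau}-\varphi\right)$ vanishes exactly (not merely has a sign) by the normal equations for $\varphi_{K}$, and the rest is the penalized optimality of $\varphi_{\tau}$. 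Your tentatively phrased quantitative step does close: writing $a=\varphi_{\tau}-\varphi_{K}$, the chain $\theta_{\min}\left|a\right|_{2}^{2}\leq\mathbb{E}X_{1}^{2}\left(a\right)\leq-2\tau\left\langle a,\varphi_{K}\right\rangle _{\mathcal{E}}\leq2\tau\lambda_{K}\left|a\right|_{2}\left|\varphi_{K}\right|_{\mathcal{E}}$ gives $\left|a\right|_{2}=O\left(\tau K^{\lambda}\right)$, hence $\left|a\right|_{\mathcal{E}}\leq\lambda_{K}\left|a\right|_{2}=O\left(\tau K^{2\lambda}\right)$, matching the paper. Both arguments rest on the same two inputs, namely $\theta_{\min}>0$ and $\sup_{K}\left|\varphi_{K}\right|_{\mathcal{E}}<\infty$ (the paper asserts the latter as $\left|\Lambda\tilde{\varphi}_{K}\right|_{2}\leq B$; it is justified by Lemma \ref{Lemma_approximationBetaPhi}), and you are right that $\tau=O_{p}\left(K^{-2\lambda}\right)$ enters only to keep the resulting bound $O_{p}\left(1\right)$. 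What the paper's linear-algebra route buys is an explicit eigenvalue bound with no detour through coercivity and norm comparison; what your route buys is independence from the closed-form ridge solution, so it would survive, for example, additional convex constraints on the coefficients.
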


\begin{proof}The first part of the lemma is just Theorem 5.17 in
Steinwart and Christmann (2008). Hence, we only need to prove the
second statement. Let $\Gamma$ be the $K\times K$ matrix with $\left(k,l\right)$
entry $\gamma\left(k-l\right)$ and let $\Gamma_{1}$ be the first
column in $\Gamma$. Let $\tilde{\varphi}_{K},\tilde{\varphi}_{\tau}\in\mathbb{R}^{K}$
to be the first $K$ entries in $\varphi_{K},\varphi_{\tau}\in\mathcal{E}_{K}$.
Recall that in both $\varphi_{K}$ and $\varphi_{\tau}$ all entries
$k>K$ are zero. Then, $\tilde{\varphi}_{K}=\Gamma^{-1}\Gamma_{1}$,
and writing $D:=\tau^{1/2}\Lambda$ for $\Lambda$ as in (\ref{EQ_objFunc}),
\[
\tilde{\varphi}_{\tau}=\left(DD+\Gamma\right)^{-1}\Gamma_{1}.
\]
By the Woodbury identity (Petersen and Pedersen, 2012, eq.159)
\[
\left(DD+\Gamma\right)^{-1}=\Gamma^{-1}-\Gamma^{-1}D\left(I+D\Gamma^{-1}D\right)^{-1}D\Gamma^{-1}
\]
we have that
\[
\tilde{\varphi}_{K}-\tilde{\varphi}_{\tau}=\left[\Gamma^{-1}D\left(I+D\Gamma^{-1}D\right)^{-1}D\Gamma^{-1}\right]\Gamma_{1}.
\]
Hence, 
\begin{align*}
\left|\varphi_{K}-\varphi_{\tau}\right|_{\mathcal{E}} & =\left|\Lambda\Gamma^{-1}D\left(I+D\Gamma^{-1}D\right)^{-1}D\Gamma^{-1}\Gamma_{1}\right|_{2}\\
 & =\left|D\Gamma^{-1}D\left(I+D\Gamma^{-1}D\right)^{-1}\Lambda\tilde{\varphi}_{K}\right|_{2}
\end{align*}
using the definitions of $\tilde{\varphi}_{K}$ and $D$. For any
square matrix $W$ and compatible vector $a$, $\left|Wa\right|_{2}\leq\sigma_{\max}^{2}\left(W\right)\left|a\right|_{2}$,
where $\sigma_{\max}^{2}\left(W\right)$ is the maximum eigenvalue
of $W$. Define $W=D\Gamma^{-1}D\left(I+D\Gamma^{-1}D\right)^{-1}$.
Given that $\varphi\in\mathcal{E}_{K}\left(B\right)$, then, $\left|\Lambda\tilde{\varphi}\right|_{2}\leq B$.
Hence, we only need to find the maximum eigenvalue of $W$ to bound
the above display. The following inequalities hold for the eigenvalues
of the product of two positive definite matrices $A$ and $C$: 
\[
\sigma_{\max}^{2}\left(A\right)\sigma_{\min}^{2}\left(C\right)\leq\sigma_{\min}^{2}\left(AC\right)\leq\sigma_{\max}^{2}\left(AC\right)\leq\sigma_{\max}^{2}\left(A\right)\sigma_{\max}^{2}\left(C\right)
\]
where $\sigma_{\max}^{2}\left(\cdot\right)$ and $\sigma_{\min}^{2}\left(\cdot\right)$
are the maximum and minimum eigenvalue of the matrix argument (Bathia,
1997, problem III.6.14, p.78). In order to derive (\ref{EQ_acfMinimumEigenvalueRelation}),
we argued that $\Gamma$ has minimum eigenvalue $\theta_{\min}$ bounded
away from zero. Hence, $D\Gamma^{-1}D$ has eigenvalues in $\left[\theta_{\min}^{-1}\tau\lambda_{1}^{2},\theta_{\min}^{-1}\tau\lambda_{K}^{2}\right]$.
The matrix $\left(I+D\Gamma^{-1}D\right)$ has eigenvalues equal to
1 plus the eigenvalues of $D\Gamma^{-1}D$. Hence deduce that $\left|\varphi_{K}-\varphi_{\tau}\right|_{\mathcal{E}}\lesssim\theta_{\min}^{-1}\tau\lambda_{K}^{2}\left(1+\theta_{\min}^{-1}\tau\lambda_{1}^{2}\right)$.
This is just $O\left(\tau\lambda_{K}^{2}\right)=O\left(\tau K^{2\lambda}\right)$
as required.\end{proof}

We need a final approximation result.

\begin{lemma}\label{Lemma_approximationBetaPhi}Recall (\ref{EQ_populationKEstimator}).
If $\varphi\in\mathcal{E}$, then $\left|\varphi-\varphi_{K}\right|_{\mathcal{E}}=1/\ln^{1+\epsilon}\left(K\right)$
as $K\rightarrow\infty$. If also $\left|\varphi_{k}\right|\lesssim k^{-\nu}$
with $\nu>\left(2\lambda+1\right)/2$, then, $\left|\varphi-\varphi_{K}\right|_{\mathcal{E}}=O\left(K^{\left(2\lambda+1-2\nu\right)/2}\right)$.
\end{lemma}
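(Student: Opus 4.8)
The plan is to compare $\varphi_{K}$ with the plain truncation $\beta_{K}\in\mathbb{R}^{\infty}$ of $\varphi$, i.e.\ the vector whose first $K$ entries coincide with those of $\varphi$ and whose remaining entries vanish (the $\beta$ already used in the Euclidean-norm part of the proof). Since $\varphi_{K,k}=0$ for $k>K$, the vectors $\varphi-\beta_{K}$ and $\beta_{K}-\varphi_{K}$ have disjoint supports, and because $\left|\cdot\right|_{\mathcal{E}}$ and $\left|\cdot\right|_{2}$ are both weighted $\ell_{2}$ norms this gives the exact orthogonal splittings
\[
\left|\varphi-\varphi_{K}\right|_{\mathcal{E}}^{2}=\sum_{k>K}\lambda_{k}^{2}\varphi_{k}^{2}+\sum_{k=1}^{K}\lambda_{k}^{2}\left(\varphi_{k}-\varphi_{K,k}\right)^{2}=:I_{K}+II_{K},\qquad\left|\varphi-\varphi_{K}\right|_{2}^{2}=\sum_{k>K}\varphi_{k}^{2}+\left|\beta_{K}-\varphi_{K}\right|_{2}^{2}.
\]
I would bound $I_{K}$ (a pure truncation/bias term) and $II_{K}$ (a finite-predictor discrepancy term) separately.

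For $I_{K}$ one simply inserts the coefficient decay. When $\varphi\in\mathcal{E}$, Lemma \ref{Lemma_bDecay} gives $\left|\varphi_{k}\right|\lesssim k^{-(2\lambda+1)/2}/\ln^{1+\epsilon}(1+k)$, so with $\lambda_{k}\asymp k^{\lambda}$ one has $\lambda_{k}^{2}\varphi_{k}^{2}\lesssim k^{-1}/\ln^{2(1+\epsilon)}(1+k)$, and comparing the tail with $\int_{K}^{\infty}x^{-1}\ln^{-2(1+\epsilon)}(x)\,dx$ produces a logarithmic rate (the precise power of the logarithm being inherited from Lemma \ref{Lemma_bDecay}). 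Under the stronger hypothesis $\left|\varphi_{k}\right|\lesssim k^{-\nu}$ with $\nu>(2\lambda+1)/2$ one gets instead $\lambda_{k}^{2}\varphi_{k}^{2}\lesssim k^{2\lambda-2\nu}$ with exponent $<-1$, hence $I_{K}\lesssim K^{2\lambda+1-2\nu}$.

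For $II_{K}$ the point is to bound $\left|\beta_{K}-\varphi_{K}\right|_{2}$ and then pay the largest weight $\lambda_{K}^{2}\asymp K^{2\lambda}$, i.e.\ $II_{K}\le\lambda_{K}^{2}\left|\beta_{K}-\varphi_{K}\right|_{2}^{2}$. To control $\left|\beta_{K}-\varphi_{K}\right|_{2}$ I would reuse the spectral-density bounds behind (\ref{EQ_acfMinimumEigenvalueRelation}): the autocovariance operator has eigenvalues bounded away from $0$ (roots of $1-\sum_{k}\varphi_{k}z^{k}$ outside the unit circle) and bounded above (absolutely summable autocovariances, Lemma \ref{Lemma_ergodicityModel}), uniformly in the truncation level, so $\mathbb{E}X_{1}^{2}(a)\asymp\left|a\right|_{2}^{2}$ for every $a\in\ell_{2}$. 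Since $\beta_{K}\in\mathcal{E}_{K}$ and $\varphi_{K}$ minimises $a\mapsto\mathbb{E}X_{1}^{2}(a-\varphi)$ over $\mathcal{E}_{K}$, and since $\left|\beta_{K}-\varphi_{K}\right|_{2}^{2}\le\left|\varphi-\varphi_{K}\right|_{2}^{2}$ by the Euclidean splitting above,
\[
\left|\beta_{K}-\varphi_{K}\right|_{2}^{2}\le\left|\varphi-\varphi_{K}\right|_{2}^{2}\lesssim\mathbb{E}X_{1}^{2}\left(\varphi-\varphi_{K}\right)\le\mathbb{E}X_{1}^{2}\left(\varphi-\beta_{K}\right)\lesssim\left|\varphi-\beta_{K}\right|_{2}^{2}=\sum_{k>K}\varphi_{k}^{2}.
\]
Inserting the coefficient decay once more, $\sum_{k>K}\varphi_{k}^{2}\lesssim K^{-2\lambda}/\ln^{2(1+\epsilon)}(K)$ in the first case and $\lesssim K^{1-2\nu}$ in the second, so $II_{K}\lesssim K^{2\lambda}K^{-2\lambda}/\ln^{2(1+\epsilon)}(K)=\ln^{-2(1+\epsilon)}(K)$, respectively $II_{K}\lesssim K^{2\lambda}K^{1-2\nu}=K^{2\lambda+1-2\nu}$; in each case $II_{K}$ is of the same or smaller order than $I_{K}$, so adding the two pieces and taking square roots delivers $\left|\varphi-\varphi_{K}\right|_{\mathcal{E}}$ of the logarithmic order asserted in the first part and of order $K^{(2\lambda+1-2\nu)/2}$ in the second.

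The main obstacle is the $II_{K}$ estimate: one must rule out that the finite-order population predictor $\varphi_{K}$ overshoots $\varphi$ in the $\mathcal{E}$-norm, and the crude factor $\lambda_{K}^{2}\asymp K^{2\lambda}$ is only tolerable because the Euclidean discrepancy $\left|\beta_{K}-\varphi_{K}\right|_{2}^{2}$ decays like $K^{-2\lambda}$; making this work requires the \emph{uniform} (in $K$) two-sided eigenvalue bounds for the autocovariance operator, and this is precisely where Condition \ref{Condition_ellipsoid} is used — $\lambda>1/2$ (hence $\rho=(2\lambda+1)/2>1$) through Lemma \ref{Lemma_bDecay}, and the root condition through the lower eigenvalue bound. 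The logarithmic case additionally needs the mildly fussy borderline integral comparison $\sum_{k>K}k^{-1}\ln^{-2(1+\epsilon)}(k)\asymp\ln^{-(1+2\epsilon)}(K)$.
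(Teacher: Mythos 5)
Your proof is correct and reaches the same rates, but it handles the key term $\left|\beta_{K}-\varphi_{K}\right|_{\mathcal{E}}$ by a genuinely different route. The paper also splits off the pure truncation term $\left|\varphi-\beta_{K}\right|_{\mathcal{E}}^{2}=\sum_{k>K}\lambda_{k}^{2}\varphi_{k}^{2}$ and bounds it exactly as you do, but for the remaining piece it writes out the population normal equations, $\tilde{\varphi}_{K}-\tilde{\beta}=\Gamma^{-1}\Psi$ with $\Psi=-\sum_{l\geq1}\varphi_{K+l}\left(\gamma\left(K-1+l\right),\dots,\gamma\left(l\right)\right)^{T}$, and then bounds $\left|\Lambda\Gamma^{-1}\Psi\right|_{2}^{2}\lesssim\theta_{\min}^{-2}\Psi^{T}\Lambda^{2}\Psi$ by an explicit computation of the weighted double sum; that computation keeps track of which weights $\lambda_{k}^{2}$ multiply which autocovariances, but as executed it leans on the bound $\max_{k\leq K}\left|\gamma\left(K-k+l\right)\right|\leq\left|\gamma\left(l\right)\right|$, i.e.\ on monotone decay of the ACF in absolute value. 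Your argument sidesteps the explicit representation entirely: you use only that $\beta_{K}$ is a feasible competitor in the minimization defining $\varphi_{K}$ together with the two-sided spectral bounds $\mathbb{E}X_{1}^{2}\left(a\right)\asymp\left|a\right|_{2}^{2}$ (lower bound from the root condition, upper bound from absolute summability of the ACF, both already established for (\ref{EQ_acfMinimumEigenvalueRelation}) and Lemma \ref{Lemma_ergodicityModel}), at the price of the crude weight $\lambda_{K}^{2}\asymp K^{2\lambda}$ --- which is exactly affordable because $\left|\beta_{K}-\varphi_{K}\right|_{2}^{2}\lesssim\sum_{k>K}\varphi_{k}^{2}$ decays like $K^{-2\lambda}$ times the target rate. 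Your version is shorter, more elementary, and arguably more robust (no monotonicity of $\left|\gamma\right|$ is needed). One small remark: in the logarithmic case your bookkeeping yields $\left|\varphi-\varphi_{K}\right|_{\mathcal{E}}\lesssim\left(\ln K\right)^{-\left(1/2+\epsilon\right)}$ with $\epsilon$ inherited from Lemma \ref{Lemma_bDecay}; this is the same exponent the paper's own substitution produces, so it matches the intended statement, but it is worth being explicit that the $\epsilon$ in the conclusion is not the same as the one in Lemma \ref{Lemma_bDecay}.
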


\begin{proof}Recall the definition of $\beta=\beta_{K}\in\mathbb{R}^{\infty}$
just before (\ref{EQ_lossNormRelation}). Let $\tilde{\beta}\in\mathbb{R}^{K}$
have the same first $K$ entries as as $\beta$. Write $Y_{t}=X_{t}\left(\beta\right)+\varepsilon_{K,t}$
where $\varepsilon_{K,t}=\varepsilon_{t}-X_{t}\left(\beta-\varphi\right)$.
Given that $\tilde{\varphi}_{K}$ is the population ordinary least
square estimator, using the same notation as in the proof of Lemma
\ref{Lemma_approximation}, 
\[
\tilde{\varphi}_{K}=\tilde{\beta}+\Gamma^{-1}\mathbb{E}\left(\begin{array}{c}
Y_{t-1}\\
Y_{t-2}\\
\vdots\\
Y_{t-K}
\end{array}\right)\varepsilon_{K,t}.
\]
We need to show that the second term goes to zero under the norm $\left|\cdot\right|_{\mathcal{E}}$.
Given that the innovations are i.i.d., the expectation is equal to
\[
-\mathbb{E}\left(\begin{array}{c}
Y_{t-1}\\
Y_{t-2}\\
\vdots\\
Y_{t-K}
\end{array}\right)\sum_{l=K+1}^{\infty}Y_{t-l}\varphi_{l}=-\sum_{l=1}^{\infty}\varphi_{K+l}\left(\begin{array}{c}
\gamma\left(K-1+l\right)\\
\gamma\left(K-2+l\right)\\
\vdots\\
\gamma\left(l\right)
\end{array}\right)=:\Psi.
\]
Hence, 
\[
\left|\beta-\varphi_{K}\right|_{\mathcal{E}}=\left|\Lambda\Gamma^{-1}\Psi\right|_{2}.
\]
We need to show that this converges to zero. By similar arguments
as in the proof of Lemma \ref{Lemma_approximation}, deduce that
\[
\Psi'\Gamma^{-1}\Lambda^{2}\Gamma^{-1}\Psi\leq\theta_{\min}^{-1}\Psi'\Lambda^{2}\Gamma^{-1}\Psi\leq\theta_{\min}^{-2}\Psi'\Lambda^{2}\Psi
\]
so that it is sufficient to bound the square root of the above display.
We have that 
\[
\Psi'\Lambda^{2}\Psi=\sum_{l_{1},l_{2}=1}^{\infty}\varphi_{K+l_{1}}\varphi_{K+l_{2}}\sum_{k=1}^{K}\lambda_{k}^{2}\gamma\left(K-k+l_{1}\right)\gamma\left(K-k+l_{2}\right).
\]
Note that $\max_{k\leq K}\left|\gamma\left(K-k+l\right)\right|\leq\left|\gamma\left(l\right)\right|$,
and by Lemma \ref{Lemma_bDecay} the autocovariance function is summable.
Moreover $\lambda_{k}^{2}\asymp k^{2\lambda}$. Hence, when $\left|\varphi_{K+l}\right|\lesssim K^{-\nu}$
holds true, the above display can be bounded by a constant multiple
of 
\[
K^{-2\nu}\sum_{k=1}^{K}k^{2\lambda}\lesssim K^{\left(2\lambda+1\right)-2\nu}.
\]
Finally, by definition of $\beta$, 
\[
\left|\varphi-\beta\right|_{\mathcal{E}}^{2}=\sum_{k>K}\varphi_{k}^{2}\lambda_{k}^{2}\lesssim K^{\left(2\lambda+1\right)-2\nu}.
\]
This implies that $\left|\varphi-\varphi_{K}\right|_{\mathcal{E}}=O\left(K^{\left(2\lambda+1-2\nu\right)/2}\right)$.
If we only assume that $\varphi\in\mathcal{E}$, then $\left|\varphi_{k}\right|\lesssim k^{-\left(2\lambda+1\right)/2}/\ln^{1+\epsilon}\left(1+k\right)$
for some $\epsilon>0$ by Lemma \ref{Lemma_bDecay}. Substituting
in the above display, we have a logarithmic convergence rate rather
than polynomial.\end{proof}

We can now prove Points 2-5 in Theorem \ref{Theorem_consistency}.
If $\varphi\in\mathcal{E}$, then, there is a finite $B$ such that
$\varphi\in\mathrm{int}\left(\mathcal{E}\left(B\right)\right)$. Hence,
by Lemma \ref{Lemma_SteinwartChristmann} and \ref{Lemma_lagrangeMultiplier},
deduce that $\left|b_{n,\tau}-\varphi_{\tau}\right|_{\mathcal{E}}=O_{p}\left(\tau^{-1}n^{-1/2}\right)$
and also that $\left|b_{n,\tau}\right|_{\mathcal{E}}<B$ eventually
in probability. Hence, if $\tau n^{1/2}\rightarrow\infty$ in probability,
by Lemma \ref{Lemma_approximation}, $\left|b_{n,\tau}-\varphi_{K}\right|_{\mathcal{E}}\rightarrow0$
in probability irrespective of the fact that $K\rightarrow\infty$.
By Lemma \ref{Lemma_approximationBetaPhi}, $\left|\varphi-\varphi_{K}\right|_{\mathcal{E}}\rightarrow0$
as $K\rightarrow\infty$, so that the triangle inequality gives $\left|b_{n,\tau}-\varphi\right|_{\mathcal{E}}\rightarrow0$
in probability under the sole condition that $\tau n^{1/2}+K\rightarrow\infty$
in probability. This proves Point 2.

The approximation rates in Point 3 are from Lemma \ref{Lemma_approximationBetaPhi}. 

To show Point 4, use Lemma \ref{Lemma_approximation} for the approximation
error of the penalized estimator. We need $\tau=O_{p}\left(K^{-2\lambda}\right)$
for the lemma to apply. Use Lemmas \ref{Lemma_SteinwartChristmann}
and \ref{Lemma_lagrangeMultiplier} to derive the estimation error
relative to the penalized estimator. Hence, deduce that $\left|b_{n,\tau}-\varphi_{K}\right|_{\mathcal{E}}=O_{p}\left(\tau^{-1}n^{-1/2}+\tau K^{2\lambda}\right)$.
Equating the two terms inside the $O_{p}\left(\cdot\right)$, this
quantity is $O_{p}\left(n^{-1/4}K^{\lambda}\right)$ when $\tau\asymp n^{-1/4}K^{-\lambda}$.
This choice of $\tau$ satisfies $\tau=O_{p}\left(K^{-2\lambda}\right)$
as long as $n^{-1/4}K^{\lambda}=O\left(1\right)$, as required. 

We now prove Point 5. Lemma \ref{Lemma_lagrangeMultiplier} also shows
that for the constrained problem, the Lagrange multiplier is $\tau=\tau_{n,B}=O_{p}\left(n^{-1/2}\right)$,
and the constraint is possibly binding. In fact, there is a $K$ large
enough relatively to $n$, such that the constraint needs to be binding.
Then, $\left|b_{n}\right|_{\mathcal{E}}=B$, and from Lemma \ref{Lemma_lagrangeMultiplier}
we deduce that $\tau n^{1/2}=O_{p}\left(1\right)$. Hence, if $\varphi\in\mathrm{int}\left(\mathcal{E}\left(B\right)\right)$
there is an $\epsilon>0$ such that $\left|\varphi\right|_{\mathcal{E}}=B-\epsilon$.
Then, we must have 
\begin{align*}
\left|b_{n}-\varphi\right|_{\mathcal{E}}^{2} & =\left|b_{n}\right|_{\mathcal{E}}^{2}+\left|b\right|_{\mathcal{E}}^{2}-2\left\langle b_{n},\varphi\right\rangle _{\mathcal{E}}\\
 & =\left(B^{2}+\left(B-\epsilon\right)^{2}-2\left\langle b_{n},\varphi\right\rangle _{\mathcal{E}}\right).
\end{align*}
But $\left\langle b_{n},\varphi\right\rangle _{\mathcal{E}}\leq\left|b_{n}\right|_{\mathcal{E}}\left|\varphi\right|_{\mathcal{E}}\leq B\left(B-\epsilon\right)$.
Hence, the above display is greater or equal than 
\[
B^{2}+\left(B-\epsilon\right)^{2}-2B\left(B-\epsilon\right)\geq\epsilon^{2}.
\]
This means that $b_{n}$ cannot converge under the norm $\left|\cdot\right|_{\mathcal{E}}$.

\subsection{Proof of Corollary \ref{Corollary_consistency}}

Now prove Point 1 in the corollary. By Point 4 in Theorem \ref{Theorem_consistency},
the estimation error is $o_{p}\left(1\right)$ as long as $K\asymp n^{\kappa}$
for $\kappa\in\left(0,1/4\right)$; we also require $\tau=O_{p}\left(K^{-2\lambda}\right)$
which under the condition on $K$ also satisfies $\tau n^{1/2}\rightarrow\infty$.
Point 3 in Theorem \ref{Theorem_consistency} gives an approximation
error of order $\left(\ln K\right)^{-\left(1+\epsilon\right)}=o\left(1\right)$
because $K\rightarrow\infty$. Hence, we deduce the first part of
the corollary. 

To derive Point 2, consider Point 3 in Theorem \ref{Theorem_consistency}
under the additional condition on the decay rate of the true coefficients.
Point 4 in the same theorem gives again the estimation error. From
the sum of the two errors deduce that $\left|b_{n,\tau}-\varphi\right|_{\mathcal{E}}=O_{p}\left(n^{-1/4}K^{\lambda}+K^{\left(2\lambda+1-2\nu\right)/2}\right)$.
Equating the coefficients this is $O_{p}\left(n^{-\frac{2\nu-\left(2\lambda+1\right)}{4\left(2\nu-1\right)}}\right)$
when $K=n^{\frac{1}{2\left(2\nu-1\right)}}$. Once again, the bound
on the estimation error requires that $\tau=O_{p}\left(K^{-2\lambda}\right)$.
Under the condition on $K$ this ensures that $\tau n^{1/2}\rightarrow\infty$,
which is required.

\end{document}